\journalname{Machine Learning}
\setlist[description]{font=\bfseries}
\newenvironment{lemmap}[1]{
  
  \lemmaalt
}{\endlemmaalt}
\def\urltilda{\kern -.15em\lower .7ex\hbox{\~{}}\kern .04em}
\newcommand{\Mu}{\mathrm{M}}
\newcommand{\Tau}{\mathrm{T}}
\begin{document}
\title{Composition of Relational Features with an Application to Explaining Black-Box Predictors}
\titlerunning{Composition of Relational Features}

\author{
Ashwin Srinivasan$^{\ast}$ \and
A Baskar \and
Tirtharaj Dash$^{\dagger}$ \and
Devanshu Shah}

\authorrunning{A. Srinivasan et al.}

\institute{
    A. Srinivasan \and T Dash \at
    APPCAIR, BITS Pilani, India
    \and
    A. Srinivasan \and A. Baskar \and T Dash \and D Shah\at
    Department of CS \& IS \\
    BITS Pilani, K.K. Birla Goa Campus \\
    Goa 403726, India \\
    \email{\{ashwin,abaskar,tirtharaj,f20180240\}@goa.bits-pilani.ac.in} \\[6pt]
    $^{\ast}$AS is currently visiting TCS Research. 
    He is also a Visiting Professor at the
    Centre for Health Informatics, Macquarie University, Sydney; and
    a Visiting Professorial Fellow at the School of CSE, University of New South Wales, Sydney. \\
    $^{\dagger}$TD is currently at the University of California, San Diego, CA, USA.
}

\date{Received: date / Accepted: date}
\maketitle

\begin{abstract}
Three key strengths of relational machine learning programs like those developed
in Inductive Logic Programming (ILP) are: (1) The use of an expressive subset of first-order
logic that allows models that capture complex
relationships amongst data instances;
(2) The use of
domain-specific relations to guide the construction of models; and
(3) The models constructed are human-readable, which
is often one step closer to being human-understandable. The  price for these advantages is that ILP-like
methods have not been able to capitalise fully on the rapid hardware, software and algorithmic
developments fuelling current developments in deep neural networks.
In this paper, we treat relational features as functions and use the notion of
generalised composition of functions to derive complex functions from simpler ones.
Motivated by the work of McCreath and Sharma \citep{mccreath1998ime,eric:thesis} we formulate
the notion of a 
set of $\Mu$-simple features in a
{\em mode language\/} $\Mu$ and identify
two composition operators ($\rho_1$ and $\rho_2$)
from which all possible
complex features can be derived. We use these results to implement
a form of ``explainable neural network' called
Compositional Relational Machines, or CRMs.  CRMs are labelled
directed-acyclic graphs. The vertex-label for 
any vertex $j$ in the CRM contains a feature-function $f_j$
and an continuous activation function $g_j$.  
If $j$ is a ``non-input'' vertex, then $f_j$ is the composition
of features associated with vertices in
the direct predecessors of $j$.
Our focus is on CRMs in which input vertices (those without
any direct predecessors) all have $\Mu$-simple features in
their vertex-labels. We provide
a randomised procedure for constructing the structure of such CRMs,
and a procedure for estimating the parameters (the $w_{ij}$'s$)$
using back-propagation
and stochastic gradient descent. Using a notion of explanations
based on the compositional structure of features in a CRM,
we provide  empirical evidence on
synthetic data of the ability to identify appropriate explanations;
and demonstrate the use of CRMs as `explanation machines' for
black-box models that do not provide explanations for their
predictions.
\end{abstract}

\keywords{Explainable Neural Networks, Relational Features, Inductive Logic Programming, Neuro-Symbolic Learning}

\section{Introduction}
\label{sec:intro}

It has long been understood that choice of representation can make a significant difference
to the efficacy of machine-based induction. A seminal paper by \cite{quinlan1979discovering} demonstrates
the increasing complexity of models
constructed for a series of problems defined
on a chess endgame, using a fixed representation consisting of 25 features
(called properties in the paper). These
features were identified manually (by him), and captured relations between the
pieces and their locations in the endgame.  Commenting on the increasing complexity
of the models, he concludes: 
\begin{quote}
``This immediately raises the question
of whether these and other properties used in the study were appropriate.
The answer is that they were probably not; it seems likely that a chess expert could
develop more pertinent attributes \ldots If the expert does his job well the induction
problem is simplified; if a non-expert undertakes the definition of properties (as was
the case here) the converse is true.''
\end{quote}
Although Quinlan assumed the representation would be identified manually, the
possibility of automatic identification of an appropriate representation
was already
apparent to \cite{amarel1968representations} a decade earlier: ``An understanding of the relationship between
problem formulation and problem solving efficiency is a prerequisite for the design of
procedures that can automatically choose the most `appropriate' representation of a
problem (they can find a `point of view' of the problem that maximally
simplifies the process of finding a solution)''.
In fact, by `choose' what is probably meant is `construct', if we are to avoid kicking
Feigenbaum's famous knowledge-acquisition bottleneck down the road from extracting models
to extracting representations. 

It has also been known that one way to
construct representations automatically is through the use
of neural networks.
But extraction and re-use of these representations for multiple tasks have become substantially more
common only recently, with the routine availability of specialised hardware. This has allowed
the construction of  networks in which adding layers of computation is assumed to result in
increasingly complex representations (10s of layers are now common, but 100s are also within
computational range). In principle, while a single additional layer
is all that is needed (due to the Universal approximation theorems for neural networks~\citep{HORNIK1989359,Cybenko1989,pinkus_1999}), 
it is thought that the main benefit of the additional
layers lies in constructing representations that are multiple levels of abstractions,
allowing more efficient modelling. There are however 3 important issues that have surfaced:
(1) Automatic construction of abstractions in this way requires a lot of data, often
	100s of 1000s of examples;
(2) The kinds of neural networks used for constructing abstractions depend on the type of data.
For example, certain kinds of networks are used for text, others for images and so on: there is
apparently no single approach for representation learning that can automatically adjust to
the data type; and
(3) The internal representations of data are opaque to human-readability, making it
difficult to achieve the kind of understanding identified by Amarel.

Recent results with neural-based learning suggest
the possibility of viewing representation learning as program synthesis.
Of particular interest are methods like Dreamcoder \citep{ellis2021dreamcoder} that automatically
construct programs for generating data, given some manually identified primitive
functions represented in a symbolic form (Dreamcoder uses $\lambda$-expressions for the
primitive functions).  A combination of generative and discriminative
network models is used to direct the 
hierarchical construction of higher-level functions by compositions
of lower-level ones. Compositions are generated and assessed for utility in
inter-leaved  phases
of generate-and-test (called ``Dream'' and ``Wake'' phases), until the neural-machinery
arrives at a small program, consisting the sequential composition of primitive- and invented functions.
The final result is some approximation to the Bayesian MAP program for generating the data
provided, assuming a prior preference for small programs. There are good reasons to
look at this form of program-synthesis as a mechanism for automated representation learning:
(a) Empirical results with programs like Dreamcoder show that it is possible to identify
	programs with small numbers of examples (the need for large numbers of examples is
	side-stepped by an internal mechanism for generating data in the Dream phase);
(b) In principle, the symbolic language adopted for primitive functions ($\lambda$-expressions) and
	the mechanism of function composition is sufficiently expressive for constructing programs
	for data of any type; 
(c) The intermediate representations have clearly defined interpretations, based on functional
	composition.
There are however some shortcomings. First, the primitive functions have to be manually
identified. Secondly, the construction of new representations requires a combinatorial
discrete search that is usually less efficient than those based on continuous-valued
optimisation. Thirdly, the representation of $\lambda$-expressions, although
mathematically powerful, can prove daunting as a language for encoding domain-knowledge
or for interpreting the results. Finally, the Dreamcoder-like approach for representation learning
has only been demonstrated on very simple generative tasks of a geometric nature.

In this paper, we partially address these shortcomings by drawing on, and extending
some results on representation developed in the area of Inductive Logic
Programming (ILP). The main contributions of this paper are as follows:
\begin{description}
    \item[(a) Conceptual] We develop the conceptual basis for a class of `simple' relational
    features using a well-known specification language
    in ILP (mode-declarations).  Additionally, we develop composition operators for deriving
    more complex relational features, and prove some completeness properties that apply
    to the use of the operators;
    \item[(b) Implementation] We use the concepts developed to specify and implement a form
    of neural network called Compositional Relational Machines, or CRMs. An important feature of
    these networks is that each node is identified with a clearly defined relational feature.
    This allows us to associate structured `explanations' with each node in the network;
    \item[(c) Application] We present empirical results on 2 synthetic data that demonstrate the
    ability of CRMs to construct appropriate explanations; and results on using CRMs to
    act as `explanation machines' for a state-of-the-art black-box predictor on 10 real-world
    datasets.
\end{description}

The rest of the paper is organised as follows:
In \autoref{sec:func} we provide a conceptual framework for
relational features and their compositions. We use this
framework to implement CRMs in \autoref{sec:crm}.
We provide empirical evaluation of CRMs as
explanation machines in \autoref{sec:expt}.
Related work of immediate relevance to this paper are 
presented in \autoref{sec:relworks}. Concluding remarks are in
in \autoref{sec:concl}. The paper has several appendices that
act as supporting material.

\section{Relational Features and their Composition}
\label{sec:func}

In this paper, we are principally interested in specifying and
combining {\em relational features\/}. For us, a $k$-ary relational
feature  will be a function with $k$ terms as arguments. We will
specify such a feature by
$f: {\cal A}_1 \times {\cal A}_2 \times \cdots \times {\cal A}_k  \rightarrow {\cal B}$,
where ${\cal A}_1, {\cal A}_2,\ldots, {\cal A}_k, {\cal B}$ are some sets.
For the most part in this paper, we will restrict ourselves to $k=1$ and
${\cal B} = \{0,1\}$, although the results here can be generalised.
We will denote this setting as $f: {\cal A} \rightarrow \{0,1\}$, for some set ${\cal A}$.
A relational feature is defined in 2 steps. First, we represent the conditions under which
the feature takes the value $1$ using a clause
of the form:\footnote{
See \autoref{app:logic} for a summary of logical syntax and concepts needed for this paper.
We assume a logic with equality ($=/2$).}

\[
    C:~~\forall X ~ (p(X) \leftarrow \exists {\mathbf Y} {Body}(X,{\bf Y}))
\]
or, simply:
\[
 C:~~( p(X) \leftarrow {Body}(X,{\bf Y}))
\]

\noindent
to mean the quantification as shown earlier.
Here, the fixed predicate symbol $p(X)$ is called the {\em head} of $C$,
and ${Body}(X,\cdot)$--the {\em body\/} of clause $C$--is
a conjunction of literals $l_2,l_3\ldots,l_k$ containing some existentially quantified variables
collectively represented here as ${\mathbf Y}$. We assume the body
of $C$ does not contain
a literal of the form $p(\cdot)$ (that is, $C$ is not self-recursive)
and call clauses like these {\em feature-clauses\/}.\footnote{This clause is equivalent to
the disjunct $l_1 \vee \neg l_2 \vee \cdots \vee \neg l_k$. It
will sometimes also be written as the set of
literals $\{l_1, \neg l_2,\ldots,\neg l_k\}$. The literals
can contain predicate symbols representing
relations: hence the term ``relational''.
The requirement that all feature-clauses have the same predicate symbol $p/1$ in
the head is a convenience that will be helpful in what follows. It may be
helpful to read the symbol $p/1$ as a proposition defined on $X$.}
The clausal representation
does not tell us how to obtain the value ($0$ or $1$) of the feature itself for
any $X = a$. For
this we assume an additional set of clausal formulae $B$ (``background'') which
does not contain any occurrence of the predicate-symbol $p/1$
and define the {\em feature-function\/} associated with the clause $C$ as
follows. Let $\theta_a$ denote the substitution $\{X/a\}$ for $a \in {\cal A}$. Then:
 \[f_{C,B}(a) = \left \{
 \begin{array}{l l}
 1~~~~{\mathrm{if}}~ B \cup (C\theta_a) \models p(a) \\
            0~~~{\mathrm {otherwise}}
\end{array}\right.\]

\noindent
In general, given feature-clauses $C_1,C_2,\ldots,C_j$
we will write $f_{C_i,B}(\cdot)$ as $f_i(\cdot)$ ($1 \leq i \leq j)$,
when the context is obvious.
If $f_i(x) = 1$ for $x = a$, we will say ``the feature $f_i$ is true for $x = a$''.

\begin{example}
An early example of a problem requiring relational features was
the problem of discriminating amongst goods trains \citep{michalski1980pattern},
which has subsequently served as a touchstone for the construction
and use of relational features  (see for example, the ``East-West Challenge''
\citep{michie1994international}). In its original formulation, the task is to distinguish
eastbound trains from westbound ones using properties of the carriages and their
loads (the engine's properties are not used), using pictorial descriptions like
these ({\bf {T1}} is eastbound and {\bf {T2}} is westbound):

\vspace*{0.5cm}
\centerline{\includegraphics[width=0.7\textwidth]{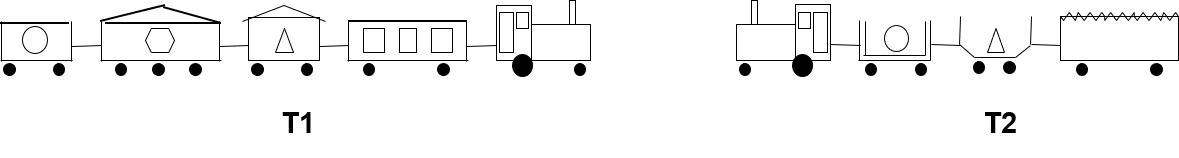}}
\vspace*{0.5cm}

\noindent
Examples of feature-clauses are:
\begin{align*}
    C_1 &: p(X) \leftarrow (has\_car(X,Y), short(Y)) \\
    C_2 &: p(X) \leftarrow  (has\_car(X,Y), closed(Y)) \\
    C_3 &: p(X) \leftarrow  (has\_car(X,Y), short(Y), closed(Y)) \\
    C_4 &: p(X) \leftarrow  (has\_car(X,Y), has\_car(X,Z),  short(Y), closed(Z))
\end{align*}

\noindent
Here, we will assume that predicates like $has\_car/2$, $short/1$, $closed/1$,
$long/1$, $has\_load/3$, etc., are defined as part of the background $B$,
and capture the situation
shown diagrammatically. That is, 
\begin{align*}
    B = \{~ & has\_car(t1,c1\_1), has\_car(t1,c1\_2), \ldots \\
            & long(c1\_1), closed(c1\_1), has\_load(c1\_1,square,3), \ldots \\
            & has\_car(t2,c2\_1), has\_car(t2,c2\_2), \ldots ~\}.
\end{align*}
Then the corresponding feature-function values are:
\begin{align*}
    f_{C_1,B}(t1) = f_1(t1) = 1; &~ f_1(t2) = 1; \\
    f_{C_2,B}(t1) = f_2(t1) = 1; &~ f_2(t2) = 1; \\
    f_{C_3,B}(t1) = f_3(t1) = 1; &~ f_3(t2) = 0; \\
    f_{C_4,B}(t1) = f_4(t1) = 1; &~ f_4(t2) = 1.
\end{align*}
    
Although not used in this paper, feature-clauses
need not be restricted to descriptions of single objects. An
example of a feature-clause about train-pairs for example is:
\begin{align*}
    C_5 &: p((X1,X2)) \leftarrow (has\_length(X1,L1), has\_length(X2,L2), L1 \geq L2).
\end{align*}
\noindent
(The corresponding feature-function will then also be defined over
pairs of objects.)

\end{example}


We intend to describe a mechanism for automatically enumerating feature-clauses like these,
as well as
mechanisms for combining simpler feature-clauses to give more complex ones. We start with
some preliminary definitions needed.

\subsection{Preliminaries}


It will be
necessary for what follows to assume an ordering over literals in a feature-clause.

\begin{definition}[Ordered Clause]
Let $C$ be a feature-clause with 1 head literal and $k-1$ body literals. We assume
an ordering over the literals that maps the set of literals in the clause to a sequence
$\langle C \rangle$ =
$\langle \lambda_1, \lambda_2, \lambda_3,\ldots, \lambda_k \rangle$, where $\lambda_1$
is the head literal and the $\lambda_2,\ldots,\lambda_k$ are literals in the body of the
feature-clause.
\end{definition}

\begin{definition}[Ordered Subclause]
Let $\langle C \rangle$ =
$\langle \lambda_1, \lambda_2, \lambda_3,\ldots, \lambda_k \rangle$ be an ordered
clause. Then an ordered subclause $\langle C' \rangle$ is any clause
$\langle \lambda_1', \lambda_2', ,\ldots, \lambda_j' \rangle$ where
$\lambda_1' = \lambda_1$ and $\langle \lambda_2',\ldots,\lambda_j' \rangle$ is
a sub-sequence of $\langle \lambda_2,\ldots,\lambda_k \rangle$.
\end{definition}

\noindent
From now on, we will use the term ``ordered clause'' to emphasise
an ordering on the literals is assumed. For simplicity, we will assume
that the intended ordering is reflected in a left-to-right reading of the
clause.
Given an ordered clause, it is possible to recover trivially the
set of literals constituting the feature-clause. We use
$Set(\langle C \rangle) = Set(\langle \lambda_1,\ldots,\lambda_k \rangle)$
= $\{\lambda_1,\neg \lambda_2,\ldots,\neg\lambda_k\}$.
Usually, we will further use the set-notation interchangeably with
$\lambda_1 \leftarrow \lambda_2,\ldots,\lambda_k$ to
denote the feature-clause $C$.

\subsection{Feature-Clauses in a Mode Language}
\label{sec:featclauseinmode}
The field of Inductive Logic Programming (ILP) has extensively
used the idea of a {\em mode language\/} to specify a set of acceptable
clauses.\footnote{The notion of associating {\em modes} for predicates
has its origins in typed logics and functional programming. Even in ILP,
modes are not the only way of specifying acceptable sets of clauses; they
are used here as they provide a straightforward way of specifying
the notion of {\em simple features} that follows in a later section.}
We use this approach here to specify the set of feature-clauses.
We provide details of mode-declarations and the definition of a
mode-language based on such declarations in \autoref{app:modes}.
We need the following concepts from the Appendix:
(a) type-names and their definitions;
(b) set of mode declarations and clauses in the mode-language;
(c) input term of type $\gamma$ in some literal; and
(d) output term of type $\gamma$ in some literal.
With these notions in place, we will require the mode-language
for specifying feature-clauses to satisfy the following constraints:

\begin{enumerate}[label=MC{\arabic*}., leftmargin=1.9\parindent]
    \item The set of modes $\Mu$ contains exactly one
        mode-declaration for every predicate occurring in a feature-clause;\label{detmodeconstr}
    \item All modes in $\Mu$ for predicates which appear in the body of a feature-clause
        contain at least 1 ``input'' argument; and
    \item If $\mu=modeh(p) \in \Mu$, then $p$ is an unary predicate and $modeb(p)$ does not occur in $\Mu$.
\end{enumerate}

\noindent
These constraints extend to $p/k$ if the features are defined over 
a product-space.  We note that the restriction MC1 is more strict than the mode language 
allowed by ILP implementations like Progol~\citep{muggleton1995inverse} or 
Aleph~\citep{srinivasan2001aleph}. Effectively, it prevents a predicate being called
in multiple ways, which is allowed in logic programming languages
like Prolog. Here, to achieve the same effect, we will need to
use different predicate symbols for each mode of call. Now, variables (and ground-terms) are constrained by the type-restrictions. Given a set of mode-declarations, feature-clauses in
the mode-language are therefore more constrained than we have presented
thus far.\footnote{
That is,  a feature-clause of
the form $\forall X (p(X) \leftarrow \exists {\mathbf Y}Body(X,{\mathbf Y}))$
should be read as
$\forall X \in \Lambda~~[p(X) \leftarrow \exists {\mathbf Y} \in {\mathbf \Lambda}~Body(X,{\mathbf Y})]$ where
$\Lambda$ and ${\mathbf \Lambda}$ informally denote the sorts of $X$ and
the ${\mathbf Y}$'s. For simplicity, we will not refer to the type-restrictions
on variables and terms when we say that a feature-clause is
in a mode-language. The restrictions are taken as understood, and to be enforced
during inference.} Henceforth we will use $\Mu$ is a set of ``constrained mode-declarations'' to
mean that $\Mu$ satisfies MC1--MC3.




    

\noindent
The categorisation of variables in a literal as being inputs or outputs allows
a natural association of an ordered clause with a graph.

\begin{definition}[Clause Dependency-Graph]
Let $\Mu$ be a set of  constrained mode-declarations and $\Tau$ 
be a set of type definitions for the type-names in $\Mu$.
 Let $\langle C \rangle$  be an ordered clause
 $(\lambda_1\leftarrow \lambda_2,\ldots,\lambda_k)$ in the mode-language
 ${\cal L}_{\Mu,\Tau}$ (see \autoref{app:modes}). 
The clause dependency-graph $G_{\Mu}(\langle C \rangle)$ associated with $\langle C \rangle$ is the labelled directed graph
$(V,E,\psi)$ defined as follows:
\begin{itemize}
    \item $V = \{v_1,v_2\dots,v_k\}$;
    \item for each $i$, $\psi(v_i)=\lambda_i$;
    \item $(v_i,v_j) \in E$ iff:
        \begin{itemize}
            \item $i=1$, $2 \leq j \leq k$, and there exists a variable $X$ such that $\lambda_i$ has $X$ as an input variable of type $\gamma$ in $\Mu$ and $\lambda_j$ has $X$ as an input variable of type $\gamma$ in $\Mu$; or
            \item $1<i<j$, $\lambda_i$ has an output variable $X$ of type $\gamma$ in $\Mu$ and $X$ occurs in $\lambda_j$ as an input variable of type $\gamma$ in $\Mu$.
        \end{itemize}
\end{itemize}
\end{definition}

\begin{example}
\label{ex:hascar}
Let us assume the set of mode-declarations $\Mu$ contain at least the following: $\{$
$modeh(p(+train))$,
$modeb(has\_car(+train,-car))$,
$modeb(short(+car))$,
$modeb(closed(+car))$
$\}$  
where $train$ and $car$ are type-names, with definitions
in $\Tau$. 
The ordered clause
$p(X) \leftarrow (has\_car(X,Y), has\_car(X,Z),  short(Y), closed(Z))$
is in the mode-language ${\cal L}_{\Mu,\Tau}$.
The clause dependency-graph for this ordered clause is given below and
$\psi$ is defined as follows: $\psi(v_1)=p(X)$,
$\psi(v_2)=has\_car(X,Y)$, $\psi(v_3)= has\_car(X,Z)$, $\psi(v_4)=short(Y)$, 
$\psi(v_5)=closed(Z)$. 
\begin{center}
    \includegraphics[width=0.6\linewidth]{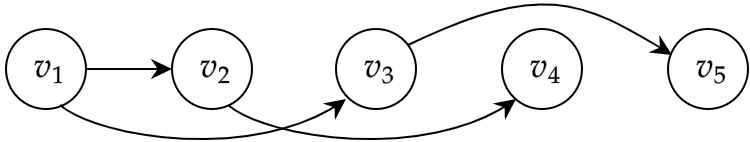}
\end{center}
\end{example}

\begin{remark}
\label{rem:clausegraph}
We note the following about the clause dependency-graph:

\begin{itemize}
\item The clause dependency-graph for an ordered clause is a directed acyclic graph. 
    This is evident from the definition: if $(v_i,v_j) \in E$ then $i<j$.
\item The clause dependency-graph for an ordered clause is unique.
\end{itemize}
\end{remark}

Given a set of modes $\Mu$ we introduce the notion
$\Mu$-simple clauses in a manner similar to \citep{eric:thesis}.

\begin{definition}[Source- and Sink- Vertices and Literals]
\label{def:sourcesink}
Given a set of constrained mode-declarations $\Mu$, type-definitions
$\Tau$, and an ordered clause $\langle C \rangle$ in ${\cal L}_{\Mu,\Tau}$, let
$G_{\Mu}(\langle C \rangle)=(V,E,\psi)$ be the clause dependency-graph of $\langle C \rangle$.
A vertex $v \in V$ is said to be a sink vertex if there is no outgoing edge from $v$. The
corresponding literal, $\psi(v)$, is called a sink literal. A vertex $v \in V$ is said to be a source vertex if there is no incoming edge to $v$. The corresponding literal, $\psi(v)$, is called a source literal. 
\end{definition}

\begin{example}
The clause in \autoref{ex:hascar} has one source 
vertex $v_1$ and two sink vertices: $v_4$ and $v_5$.
Correspondingly, there is one source literal $p(X)$,
and two sink literals: $short(Y)$, $closed(Y)$. 
\end{example}

\noindent

\begin{remark}
\label{rem:singlesource}
Let $\Mu$ satisfy MC1--MC3. Then:
\begin{itemize}
\item The clause dependency-graph of any ordered clause in
    ${\cal L}_{\Mu,\Tau}$ will have exactly 1 source-vertex $v_1$
    (and exactly 1 source-literal). 
\item For every vertex $v$ in the clause dependency-graph,
    there exists at least one path from $v_1$ to $v$.
    Also the union of all the paths from $v_1$ to $v$ will be a directed acyclic
    graph and this is unique. We will denote this directed acyclic graph by ${DAG}_{\langle C \rangle}(v)$.
\item For every vertex $v$ in the clause dependency-graph, either $v$ will be a sink vertex or it     will be on a path from the source vertex ($v_1$) to a sink vertex. 
\end{itemize}
Of these the third observation is not obvious. Suppose a vertex $v$ is not a sink vertex.
Then it will have at least one outgoing edge from it. By following outgoing edges forward,
we will end  in a sink vertex. If $v\neq v_1$, then there is at least one incoming edge to $v$. By following incoming edges backward, we will end in a source vertex. Since there is only one source vertex, this will be $v_1$.  So $v$ will be a sink vertex or it will be on a path from $v_1$ to a sink vertex.
\end{remark}

\begin{definition}[$\Mu$-Simple Feature-Clause]
Given a set of constrained mode-declarations $\Mu$, type-definitions $\Tau$,
an ordered feature-clause $\langle C \rangle$ in the mode-language ${\cal L}_{\Mu,\Tau}$
is said to be an $\Mu$-simple feature-clause, or simply a $\Mu$-simple clause iff the clause dependency-graph
$G_{\Mu}(\langle C \rangle)$ has exactly one sink literal.  
\end{definition}

\begin{example}
We continue \autoref{ex:hascar}.
The ordered clauses $p(X) \leftarrow has\_car(X,Y)$, $short(Y)$  and $p(X) \leftarrow has\_car(X,Y)$ are $\Mu$-simple clauses as
both have only one sink literal  $short(Y)$ and $has\_car(X,Y)$ respectively. 
The ordered clause
$p(X) \leftarrow has\_car(X,Y), short(Y), closed(Y)$ is not a $\Mu$-simple clause
as it has two sink literals $short(Y)$ and $closed(Y)$. 
\end{example}

\begin{definition}[Maximal $\Mu$-simple subclause]
Given a set of constrained mode-declarations $\Mu$, type-definitions $\Tau$,
an ordered clause $\langle C' \rangle$ in the mode-language ${\cal L}_{\Mu,\Tau}$
is said to be a maximal $\Mu$-simple subclause of an ordered clause
$\langle C \rangle$ in ${\cal L}_{\Mu,\Tau}$ iff: 
(a) $\langle C'\rangle$ is an ordered subclause of $\langle C\rangle$; and
(b) there is an isomorphism  from the clause dependency-graph
    $G_{\Mu}(\langle C' \rangle)$ to 
    ${DAG}_{\langle C \rangle}(v)$ for some sink vertex $v$ in $G_{\Mu}(\langle C \rangle)$.
\end{definition}

\begin{example}
Continuing \autoref{ex:hascar},
the ordered clause $p(X) \leftarrow has\_car(X,Y), short(Y)$ is a maximal $\Mu$-simple
subclause of
$p(X) \leftarrow has\_car(X,Y), short(Y), closed(Y)$. The ordered clause
$p(X) \leftarrow has\_car(X,Y)$ is not a maximal  $\Mu$-simple subclause of
$p(X) \leftarrow has\_car(X,Y), short(Y),closed(Y)$. 
\end{example}

\begin{definition}[Basis]
Let $\Mu$ be a set of constrained mode-declarations, $\Tau$ be a set of type-definitions, $\langle C \rangle$
be an ordered clause  in the mode-language ${\cal L}_{\Mu,\Tau}$.
Then $Basis(\langle C \rangle)$ = $\{$
$\langle C' \rangle ~|~ \langle C' \rangle $ is a maximal $\Mu$-simple subclause of $\langle C \rangle$ $\}$.
\end{definition}

\begin{example} The basis for $p(X) \leftarrow has\_car(X,Y), has\_car(X,Z), short(Y), closed(Z)$ is $\{p(X) \leftarrow has\_car(X,Y), short(Y)$, $p(X) \leftarrow has\_car(X,Z),closed(Z)\}$.
\end{example}

\begin{remark}
For given an ordered clause $C$ in ${\cal L}_{\Mu,\Tau}$, $Basis(\langle C \rangle)$ is unique. Moreover, if the number of sink vertices in the clause dependency-graph of $\langle C \rangle$ is $k$, then $|Basis(\langle C \rangle)|=k$. 
\end{remark}

\begin{lemma}[Basis Lemma]
\label{lemma:basis}
Let $\Mu$ be a set of constrained mode-declarations, $\Tau$ be a set of type-definitions. Let
 $\langle C \rangle$ be an ordered clause in the mode-language ${\cal L}_{\Mu,\Tau}$ with
 $k$ sink-literals. If $Basis(\langle C \rangle) = \{
 \langle S_1 \rangle,\langle S_2\rangle,\dots, \langle S_k \rangle\}$ then  
$\bigcup_{i=1}^k S_i= C$. 

\end{lemma}
\begin{proof}
Let $G_{\Mu}(\langle C \rangle)=(V,E,\psi)$ be the clause dependency-graph for the ordered clause
$\langle C \rangle$ and $Basis(\langle C \rangle)=$
$\{$
$ \langle S_1 \rangle,\langle S_2\rangle,\dots, \langle S_k \rangle\}$. 
We prove $\bigcup_{i=1}^k S_i \subseteq C$ and $C \subseteq \bigcup_{i=k}^k S_i$.
We consider first  $\bigcup_{i=1}^k S_i \subseteq C$. Assume the contrary. That is,
there exists some $l \in \bigcup_{i=1}^k S_i$ but $l \not\in C$. 
Since $l$ is a literal in $\bigcup_{j=1}^k S_j$, then $l \in S_j$ for some $j$.
Since every $\langle S_j \rangle$ is an ordered subclause of $\langle C \rangle$, by
definition every literal in $S_j$ occurs in $C$. Therefore $l \in C$
which is a contradiction. 

Next we consider $C \subseteq \bigcup_{i=1}^k S_i$. Let $l$ be a literal in $C$. There exists a vertex $v_j$ in the clause dependency-graph $G_\Mu(\langle C \rangle)$ such that $\psi(v_j)=l$. Either $v_j$ is a sink vertex or not a sink vertex in $G_\Mu(\langle C \rangle)$. If it is a sink vertex, then there exists a maximal $\Mu$-simple subclause $\langle S_j \rangle $ with $v_j$ as a sink vertex. Hence $l$ is in $S_j$. If $v_j$ is not a sink vertex, then
it will be on the path from $v_1$ to some sink vertex $v_m$ (see \autoref{rem:singlesource}). Then the directed acyclic sub-graph $DAG_{\langle C \rangle}(v_m)$  will have this vertex $v_j$. Since $v_m$ is a sink vertex, there exists a 
maximal $\Mu$-simple subclause $\langle S_m \rangle $ with $v_m$ as a sink vertex and there is an isomorphism between the clause dependency-graph $G_\Mu(\langle S_m\rangle)$ and 
${DAG}_{\langle C \rangle}(v_m)$. 
Hence
$l \in S_m$. So in both cases $l$ is in $\bigcup_{i=1}^k S_i$. Hence $C
\subseteq \bigcup_{i=1}^k S_i$. \qed
\end{proof}

Let $\Mu$ be a set of constrained mode-declarations, and $\Tau$ be a set of
type definitions. Let $\Mu'$ be $\Mu$ extended
with an additional mode-declarations allowing body-literals of the form
$+\gamma = +\gamma$
(that is, $\Mu'$ allows equality between variables of the same type $\gamma$,
if this is not already allowed in $\Mu$); 
the definition of $=/2$  is provided by axioms of the equality logic.
For more details see \autoref{app:logic}.
 We note that if the ordered clause $\langle C \rangle$ is
 in ${\cal L}_{\Mu,\Tau}$, then $\langle C \rangle$ is in  ${\cal L}_{\Mu',\Tau}$.





We define operators $\rho_1,\rho_2$ as follows:
\begin{enumerate}
    \item  Let $\langle C \rangle$ be in ${\cal L}_{\Mu',\Tau}$ s.t.
            $\langle C \rangle: p(X) \leftarrow Body(X,{\mathbf{Y}})$. Then
            $\rho_1(\langle C \rangle)$ = $\{$
                $p(X) \leftarrow Body(X,{\mathbf Y}), Y_1 = Y_2 ~|~$
                    $Y_1, Y_2$ are output variables of the same type in $Body$ $\}$;
    \item  Let $\langle {C}_1 \rangle$, $\langle {C}_2 \rangle$ be in ${\cal L}_{\Mu',\Tau}$ s.t.
            $\langle {C}_1 \rangle: p(X) \leftarrow {Body}_1(X,{\mathbf{Y_1}})$
            and
             $\langle {C}_2 \rangle: (p(X) \leftarrow {Body}_2(X,{\mathbf{Y_2}})$.
             Then 
              $\rho_2(\langle {C}_1 \rangle, \langle {C}_2 \rangle)$ = $\{$
                $p(X) \leftarrow {Body}_1(X,{\mathbf Y_1}), {Body}_2(X,{\mathbf Y_2})$
                $\}$
\end{enumerate}

\noindent
These operators allows us to establish a link between the derivability
of clauses in ${\cal L}_{\Mu',\Tau}$ using $\{\rho_1,\rho_2\}$
and clauses in ${\cal L}_{\Mu,\Tau}$.

\begin{definition}[Derivation of Feature-Clauses] 
Let $\Mu$ be a set of mode-declarations, and $\Mu'$ be an extension
of $\Mu$ as above. Let $\Tau$ be a set of type-definitions, and 
$\Omega \subseteq  \{\rho_1,\rho_2\}$.
Let ${\Phi}$ be a set  of feature-clauses in ${\cal L}_{\Mu',\Tau}$.
A sequence of feature-clauses $\langle C_1 \rangle ,\langle C_2\rangle,\dots,\langle C_n
\rangle $ is said to be a derivation sequence of $\langle C_n \rangle $ from $\Phi$ using $\Omega$ iff each clause $
\langle C_i \rangle $ in the sequence is either :
(a) an instance of an element of $\Phi$ such that 
no variables other than  $X$ occur earlier in this sequence; or 
(b) an element of the set $\rho_1(\langle C_j \rangle)$ ($j < i$), if $\rho_1 \in \Omega$; or
(c) an element of the set $\rho_2(\langle C_j \rangle, \langle C_k \rangle)$ ($j,k < i$), if
    $\rho_2 \in \Omega$. We will say $\langle C_n \rangle $ is derivable
from $\Phi$ using $\Omega$  if there exists a derivation sequence of $\langle C_n \rangle$ from $\Phi$ using $\Omega$.
\end{definition}
 
\begin{example}\label{ex:derivation}
Let us assume the set of mode-declarations $\Mu$ contain the following: 
\begin{eqnarray*}
    \{~
        modeh(p(+train)), modeb(has\_car(+train,-car)), modeb(short(+car)),\\
        modeb(closed(+car)), modeb(smaller(+car,+car)
    ~\},
\end{eqnarray*}
where $train$ and $car$ are type-names, with definitions in $\Tau$. Here is a derivation 
sequence of 
\begin{align*}
    p(X) \leftarrow &~ has\_car(X,U),has\_car(X,V),smaller(U,V),U=V,\\
         &~ has\_car(X,Y),short(Y),U=Y
\end{align*}  
from 
\begin{align*}
    \{~ p(X) &\leftarrow has\_car(X,Y),short(Y), \\
        p(X) &\leftarrow has\_car(X,U),has\_car(X,V),smaller(U,V) ~\}
\end{align*}
using $\{\rho_1,\rho_2\}$. 

\begin{tabbing}
1 ~~\= $p(X) \leftarrow has\_car(X,U),has\_car(X,V),smaller(U,V)$~~~~~~~~~~~~~~~~~~~~~~~\= Given\\
2 \> $p(X) \leftarrow has\_car(X,U),has\_car(X,V),smaller(U,V),U=V$\> $1,\rho_1,U,V$\\
3 \> $p(X) \leftarrow has\_car(X,Y),short(Y)$\> Given\\
4 \> $p(X) \leftarrow has\_car(X,U),has\_car(X,V),smaller(U,V),U=V,$ \> \\
\>~~~~~~~~~~~$has\_car(X,Y),short(Y)$ \> $2,3,\rho_2$\\
5 \> $p(X) \leftarrow has\_car(X,U),has\_car(X,V),smaller(U,V),U=V,$\>\\
\>~~~~~~~~~~~$has\_car(X,Y),short(Y),U=Y$ \> $4,\rho_1,U,Y$\\
\end{tabbing}

\end{example}

It is useful to define the notion of a
$\rho$-derivation graph from a set of feature-clauses $\Phi$.

\begin{definition}[$\rho$-derivation graph given $\Phi$]
\label{def:dergraph}
Let $\gamma = (V,E,\phi)$ be  a labelled DAG with vertices $V$,
edges $E$ and vertex-labelling function $\phi$. Let $Pred(v)$ denote
the set of immediate predecessors of any $v \in V$.  
Let ${\cal F}_{\Mu}$ be a set of feature-clauses given modes
$\Mu$ and $I \subseteq {\cal F}_{\Mu}$. Let ${\Phi}$ be a set  of feature-clauses in ${\cal L}_{\Mu',\Tau}$.
Then $\gamma$ is a $\rho$-derivation graph given $\Phi$
iff:
    \begin{itemize}
        \item For each vertex $v_i \in V$, $\phi(v_i) = C_i$, where
            $C_i \in {\cal F}_{\Mu}$;
        \item $0 \leq |Pred(v)| \leq 2$ for all $v \in V$;
         \item For each $v \in V$:
            \begin{itemize}
                \item If $Pred(v) = \emptyset$ then $\phi(v) \in \Phi$;
                \item If $Pred(v) = \{u\}$ then $\phi(v) \in \rho_1(\phi(u))$;
                \item If $Pred(v) = \{u_1,u_2\}$ then $\phi(v) \in \rho_2(\phi(u_1),\phi(u_2))$
            \end{itemize}
    \end{itemize}
\end{definition}

\noindent
Since we are only concerned with $\rho_1,\rho_2$ in this
paper, we will usually call this the derivation graph given $\Phi$
or even just the derivation graph, when $\Phi$ is understood. 

\begin{remark}
\label{rem:rhoequiv}
We note that $\rho_1$ and $\rho_2$ preserve equivalence, in the
following sense:
    \begin{itemize}
        \item If $C' \equiv C$ then  $\rho_1(C) \equiv \rho_1(C')$; and
        \item If $C'_1 \equiv C_1$ and $C_2' \equiv C_2$ then
            $\rho_2(C1,C2) \equiv \rho_2(C_1',C_2')$
    \end{itemize}
    Here, equivalence across sets has the usual conjunctive meaning.
    That is, for sets $A,B$, $A \equiv B$ iff
    $\underset{x \in A}{\bigwedge} x ~\equiv~\underset{y \in B}{\bigwedge} y$. Two ordered clauses $\langle C_1 \rangle $ and $\langle C_2 \rangle$ are equivalent iff the $Set(\langle C_1 \rangle)$ is equivalent to the $Set(\langle C_2\rangle)$. 

\end{remark}

\begin{definition}[Closure]
Let $\Phi$ be a set of feature-clauses and $\Omega \subseteq \{\rho_1,\rho_2\}$. We define the closure of $\Phi$ using $\Omega$  as the set
of ordered clauses $\langle C 
\rangle $ which has a derivation sequence from $\Phi$ using $\Omega$. 
We use $Closure_\Omega(\Phi)$ to denote the closure of $\Phi$ using $\Omega$.
\end{definition}

    


We will say $\theta$ is a type-consistent substitution if for every variable $U$, the substitution $U/t \in \theta$ (that is, $\theta(U) = t$), then $U,t$ have
the same type in $\Mu$. It follows that if $\theta$ is a type-consistent
substitution for variables in an ordered clause $\langle C \rangle$ in ${\cal L}_{\Mu,\Tau}$  and
$\theta(u) = \theta(v)$ for $u,v$ in $\langle C \rangle$, then 
$u,v$ have the same type in $\Mu$.


\begin{lemma}[Derivation Lemma]
\label{lemma:deriv}
Given $\Mu, \Mu'$ and $\Omega = \{\rho_1,\rho_2\}$ as before. Let
$\langle C \rangle$ be an ordered clause in ${\cal L}_{\Mu,\Tau}$, with
head $p(X)$.
Let $S$ be a set of ordered $\Mu$-simple clauses in ${\cal L}_{\Mu,\Tau}$, with
heads $p(X)$ and all other variables of clauses in $S$ standardised apart from each other and from $C$. If there
exists a substitution $\theta$ s.t.
$Basis(\langle C \rangle) \subseteq S\theta $ then
there exists an 
ordered clause $\langle C' \rangle$
in ${\cal L}_{\Mu',\Tau}$  such that $\langle C' \rangle$ is equivalent to $\langle C \rangle$ and derivable from $S$ using $\Omega$. 
\end{lemma}

\begin{proof}
See \autoref{app:deriv}. \qed
\end{proof}

\begin{remark}
\label{rem:subsume}
Let $\langle C_1 \rangle,\langle C_2 \rangle, \ldots, \langle C_n \rangle$
be a derivation from a set of ordered clauses $S$ using $\{\rho_1,\rho_2\}$.
 Also, for any clause $\langle C_i \rangle$ in the derivation
sequence, let $f_i$ denote the corresponding feature-function
as defined in \autoref{sec:func} using background knowledge $B$. Let $a$ denote a
    data-instance. We note the following consequences for $1 \leq i < j  \leq n$:
    \begin{enumerate}
        \item $C_i$ subsumes $C_j$;\footnote{
            Here subsumption is used in the sense described by 
            \cite{plotkin1972automatic}. That is, representing clauses 
            as sets of literals, clause $C$ subsumes
            clause $D$ iff there exists a substitution
            $\theta$ s.t. $C\theta \subseteq D$.}
        \item If $f_i(a) = 0$ then $f_j(a) = 0$;
        \item If $f_j(a) = 1$ then $f_i(a) = 1$; and
        \item If $\langle C_{i+1} \rangle \in \rho_1(\langle C_i \rangle)$,
                $f_{i}(a) = 1$ and $f_{i+1}(a) = 0$ then
                there exists a clause
                $C_{i+1}'$ s.t. $C_i \equiv C_{i+1} \vee C_{i+1}'$
                and $f_{B,C_{i+1}'}(a) = 1$
    \end{enumerate}
 \noindent
 (1) follows straightforwardly since for any $\langle C_i \rangle$,
subsequent clauses in the derivation only result in
the addition of literals (that is, $C_i \subset C_j$ for $i < j$).
For (2), we note that
since $C_i \subset C_j$ and both $C_i, C_j$ have the same head literal ($p(X)$)
we can take $C_i =  \forall (p(X) \vee l_1 \vee \cdots l_k)$
and $C_j = \forall (p(X) \vee l_1 \vee \cdots l_k \vee l_{k+1} \vee \cdots l_m)$.
If $f_i(a) = 0$ then
$B \cup C_i \not\models p(a)$.
That is, there exists some interpretation
$I$ that is a model for $B \cup C_i$ s.t. $p(a)$ is false in $I$.
If $I$ is a model for $B \cup C_i$ then it is a model for $C_i$.
Further, if $I$ is a model for $C_i$ and $p(a)$ is false in $I$ 
then $I$ is a model for $\forall(l_1 \vee \cdots l_k)$.
But then $I$ is a model for
$C_j = \forall  (p(X) \vee l_1 \vee \ldots l_k \vee l_{k+1} \vee \cdots l_m)$.
Thus $I$ is a model for $B \cup C_j$ and not a model for $p(a)$.
That is, $B \cup C_j \not\models p(a)$ and $f_j(a) = 0$.
(3) follows from the
fact that if $C_i$ subsumes $C_j$ then $C_i \models C_j$ \citep{gottlob1987subsumption}.
Therefore, if $B\cup C_j \models p(a)$ then $B\cup C_i \models p(a)$.
That is, if $f_j(a) = 1$ then $f_i(a) = 1$.
For (4), let $C_i: p(X) \leftarrow {Body}_i(X,{\mathbf Y})$.
Then $C_{i+1}: p(X) \leftarrow {Body}_i(X,{\mathbf Y}), Y_i = Y_j$ for
$y_{i,j} \in {\mathbf Y}$. Since $f_i(a) = 1$ and $f_{i+1}(a) = 0$,
it must be the case that $Y_i = Y_j$ does not hold for $x = a$. Let
$C_{i+1}': p(X) \leftarrow {Body}_i(X,{\mathbf Y}), Y_i \neq Y_j$.
It is evident that, $f_{B,C_{i+1}'}(a) = 1$ and
$C_i \equiv C_{i+1} \vee C_{i+1}'$.
\end{remark}

\noindent
A specialised form of derivation results from the repeated use of $\rho_2$ first, 
followed by the repeated use of $\rho_1$. We call this form of derivation
a {\em linear derivation\/}. We describe this next (relevant proofs
are in \autoref{app:linderiv}).

\begin{definition}[Linear Derivation of Feature-Clauses]
Let $\Mu$ be a set of mode-declarations, and $\Mu'$ be an extension
$\Mu$ as earlier. Let $\Tau$ be a set of type-definitions.
Let ${\Phi}$ be a set  of feature-clauses in ${\cal L}_{\Mu',\Tau}$
and $\rho_1,\rho_2$ be the operators defined earlier.
A sequence of feature-clauses $\langle C_1 \rangle ,\langle C_2 \rangle,\dots,\langle C_n \rangle $ is said to be a linear derivation sequence of $\langle C_n \rangle $ from $\Phi$ using 
$\{\rho_1,\rho_2\}$ iff there exists $j$ such that $1 \leq j \leq n$ and:
\begin{itemize}
    \item For $i\leq j$:
        \begin{itemize}
            \item Clause $\langle C_i \rangle $ in the sequence
                is either an element of $\Phi$ or an element of the set $\rho_2(\langle C_{i-1}\rangle,\langle C_k \rangle)$ where $\langle C_k \rangle \in \Phi$ and $k<i$.
        \end{itemize}
    \item For $i > j$:
        \begin{itemize}
            \item Clause $\langle C_i \rangle $ is an element of the     set $\rho_1(\langle C_{i-1} \rangle)$.
        \end{itemize}
\end{itemize}
We will say $\langle C_j \rangle$ is linearly derivable from $\Phi$ using $\{\rho_2\}$; and
$C_n$ is linearly derivable from $\Phi$ using $\{\rho_1,\rho_2\}$.
\end{definition}

\begin{example}
We continue \autoref{ex:derivation}.  Below is a linear derivation 
sequence of 
\begin{align*}
    p(X) \leftarrow &~ has\_car(X,U),has\_car(X,V),smaller(U,V),\\
                    &~ has\_car(X,Y),short(Y),U=V, U=Y
\end{align*}
from
\begin{align*}
    \{~  p(X) &\leftarrow has\_car(X,Y),short(Y), \\
        p(X) &\leftarrow has\_car(X,U),has\_car(X,V),smaller(U,V) ~\}
\end{align*}
using $\{\rho_1,\rho_2\}$. 

\begin{tabbing}
1 ~~\= $p(X) \leftarrow has\_car(X,U),has\_car(X,V),smaller(U,V)$~~~~~~~~~~~~~~~~~~~~~~~\= Given\\
2 \> $p(X) \leftarrow has\_car(X,Y),short(Y)$\> Given\\
3 \> $p(X) \leftarrow has\_car(X,U),has\_car(X,V),smaller(U,V)$ \> \\
\>~~~~~~~~~~~$has\_car(X,Y),short(Y)$ \> $1,\rho_2$\\
4 \> $p(X) \leftarrow has\_car(X,U),has\_car(X,V),smaller(U,V),$ \> \\
\>~~~~~~~~~~~$has\_car(X,Y),short(Y),U=V$ \> $3,\rho_1, U,V$\\
5 \> $p(X) \leftarrow has\_car(X,U),has\_car(X,V),smaller(U,V),$\>\\
\>~~~~~~~~~~~$has\_car(X,Y),short(Y),U=V, U=Y$ \> $4,\rho_1,U,Y$\\
\end{tabbing}

There is no way to derive the clause $p(X) \leftarrow has\_car(X,U),has\_car(X,V),$ $smaller(U,V),$ $U=V, has\_car(X,Y), short(Y), U=Y$  using linear derivation from the given set, but we can derive an equivalent clause $p(X) \leftarrow has\_car(X,U),has\_car(X,V),$ $smaller(U,V),$ $has\_car(X,Y)$, $short(Y), U=V, U=Y$ using linear derivation. We would like to point out that the positions of equality literal $U=V$ in the first clause and the second clause are different.
\end{example}

\begin{lemma}[Linear Derivation Lemma]
\label{lemma:linderiv}
Given $M,M'$ and a set
of ordered clauses $\Phi$. If an ordered clause $\langle C \rangle$ is derivable from $\Phi$ using $\{\rho_1,\rho_2\}$
then there exists an equivalent ordered clause $\langle C' \rangle$ and it is linearly derivable from $\Phi$ using $\{\rho_1,\rho_2\}$.
\end{lemma}

\begin{proof}
See \autoref{app:linderiv}. \qed
\end{proof}

Feature-clauses and their composition using the
$\rho$-operators provide the tools for the development
of a particular kind of neural network that we describe next.

\section{Compositional Relational Machines (CRMs)}
\label{sec:crm}

Formally, a CRM is defined as follows:

\begin{definition}[CRM]
\label{rem:crm}
A CRM is a 7-tuple $(V,I,O,E,\phi,\psi,h)$ where:
\begin{itemize}
\item $V$ denotes a set of vertices; 
\item $I \subseteq V$ is a set of
``input'' vertices; 
\item $O \subseteq V$ is a set of ``output''
vertices; 
\item  $E \subseteq \{(v_i,v_j): v_i,v_j \in V, v_i \not\in O, v_j \not\in I \}$;
\item A vertex-labelling function
         $\phi: V \rightarrow {\cal F}_\Mu \times {\cal G}$, where
            ${\cal F}_{\Mu}$ is the set of feature-clauses given
            a set of modes $\Mu$; and $\cal G$ denotes a set of
            activation functions.\footnote{We assume activation
            functions in ${\cal G}$ are $\mathbb{R} \rightarrow \mathbb{R}$.}
            In this paper, we will further assume, if $v \in I$ then we
            restrict
            $\phi(v) = (\cdot,{\mathbf{1}})$, where ${\mathbf{1}(\cdot)}$ = 1; 
        \item An edge labelling function $\psi:E \rightarrow \mathbb{R}$, assigns       some real-valued labels to edges in the graph; and
        \item $h:{\mathbb{R}}^{|O|}\to {\mathbb{R}}^{n}$ is a computation
            function, for some fixed $n$
            \end{itemize}
    such that $(V,E,\phi')$ is a derivation graph (\autoref{def:dergraph}) given $\Phi$ where
            $\phi'(v)= C $ if $\phi(v)=(C,\cdot) $ and
            $\Phi=\{\phi'(v)~|~v \in I\}$.
\end{definition}

We note 2 important features of CRMs: (1) Each vertex has a
feature-clause associated with it; and (2) Edges between vertices
in a CRM are required to satisfy the constraints on edges imposed
by a derivation graph. That is, the only edges allowed
are those that result from $\rho_1$ or $\rho_2$ operations
on the feature-clauses associated with the vertices.

\subsection{CRMs as Explainable Neural Networks}
\label{sec:crmnn}

We describe a use of CRMs as a form of neural network
capable of generating logical explanations relevant
to its prediction. The architecture of the
neural network is inspired by Turing's idea of unorganised machines~\citep{turing1948intelligent}
(see \autoref{fig:crmnode}). Each ``neuron''
has 2 parts, implementing the vertex-label specification of
a CRM's node:
(i) An arithmetic part that is concerned with the $g$-function
in the CRM's vertex-label; and
(ii) A logical part that acts as a switch,
depending on the feature-clause associated with
the CRM's vertex-label.  We call neurons
in such a network ``arithmetic-logic neurons'' or ALNs
for short.

\begin{figure}[!htb]
    \centering
    \includegraphics[width=0.5\linewidth]{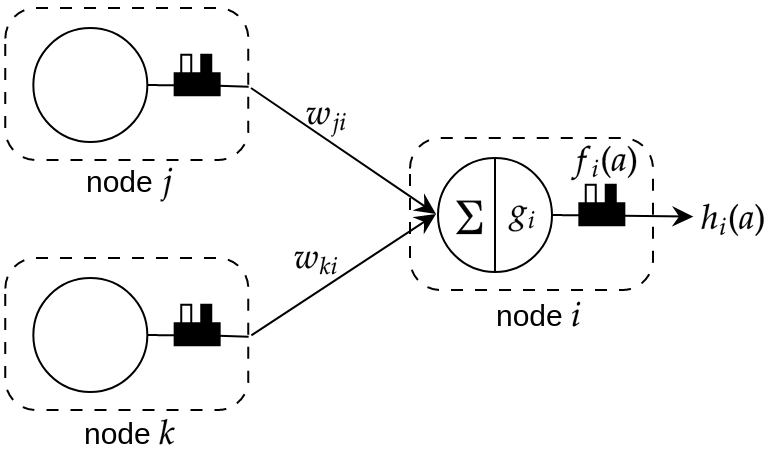}
    \caption{A neural network implementation of a CRM inspired
        by Turing's gated neural networks \citep{turing1948intelligent}. A
    neuron $n_i$ corresponds to a vertex $v_i$ in the CRM,
    with vertex-label $\psi(v_i) = (C_i,g_i)$.
    In the figure, $n_i$ is connected to neurons $n_j$ and
    $n_k$, implying $(v_j,v_i)$ and $(v_k,v_i)$ are in the
    edge-set of the CRM.
    $f_i$ (correctly $f_{C_i,B})$ is
    the feature-function obtained from the feature-clause
    $C_i$ (see \autoref{sec:func}), and acts as a gate.
       For a data instance $a$,
     $g_i(w_{ji}h_j(a) + w_{ki}h_k(a))$ passes through the gate if and
     only $f_i(a) = 1$. In general, $h_i(a)$ is thus
     $g_i(\sum_{k=Pred(n_i)}w_{ki}h_k(a))$ or $0$, where
     $Pred(n_i)$ is the set of immediate predecessors of $n_i$
     in the neural network.}
    \label{fig:crmnode}
\end{figure}

In the rest of the paper, we will use ``CRM'' synonymously
with this form of neural-network implementation.
The 7-tuple defining a CRM $(V,I,O,E,\phi,\psi,h)$ corresponds
to the following aspects of the neural implementation:
(a) The structure of the network is defined by $V,I,O,E,$ and $\phi$;
(b) The parameters of the network are defined by $\psi$; and
(c) the computation of the network is defined by $h$.
We consider each of these in turn.

\subsubsection{Structure Selection}
\label{sec:struc}

\autoref{alg:crm} is an enumerative procedure for
obtaining a 5-tuple $(V,I,O,E,\phi$), given a set of
feature-clauses $\Phi$.
For simplicity, the procedure assumes
a single activation function $g$.

\begin{algorithm}[!htb]
\SetAlgoNoLine
\LinesNumbered
\KwData{A set $\Phi$ of feature-clauses
         $\{C_1,C_2,\ldots,C_n\}$ with heads p(X) and all other variables are standardised apart from each
other;  
    an activation function g; and
    a bound $d$ on the depth of composition}
\KwResult{A CRM structure}
$I = \{v_1,v_2,\ldots,v_n\}$\;
$V = I$\;
$E = \emptyset$\;
\For{$i = 1$ to $n$}{
        $\phi(v_i) = (C_i,\mathbf{1})$
   }
\For{$j = 1$ to $d$}{
        $V_{j,1} = \{({C'},v): v \in V, \phi(v) = (C,\cdot), C' = \rho_1(C) \}$\;
        $V_{j,2} = \{(C',v_1,v_2): v_1,v_2 \in V, \phi(v_1) = (C_1,\cdot), \phi(v_2) = (C_2,\cdot), C' = \rho_2(C_1,C_2)\}$\;
        Let $V_1$ be a set of new vertices $v$ s.t. there exists
            $(C',v') \in V_{j,1}$ and $\phi(v) = (C',g)$\;
         Let $V_2$ be a set of new vertices $v$ s.t. there exists
           $(C',v_1,v_2) \in V_{j,2}$ and $\phi(v) = (C',g)$\;
         Let $V' = V_1 \cup V_2$\;
        $V = V \cup V'$\;
        $E = E \cup \{(v_1,v):~(C,v_1) \in V_{j,1}, \phi(v) = (C,g)\}$
               $\cup$
               $\{(v_1,v),(v_2,v):~(C,v_1,v_2) \in V_{j,2}, \phi(v) = (C,g)\}$\;
}
$O = \{v: v \in V, \nexists v' ~s.t.~ (v,v') \in E \}$\;
\Return $(V,I,O,E,\phi)$
\caption{\textbf{(ConstructCRM)} Depth-bounded construction
    of a CRM's structure}
\label{alg:crm}
\end{algorithm}

\autoref{alg:crm} has an important practical difficulty:
\begin{itemize}
    \item We are interested in a class of CRMs that can be constructed
        using a set of $\Mu$-simple feature-clauses
        $\{C_1,\ldots,C_n\}$. Now, it may be
        impractical to obtain all possible
        $\Mu$-simple feature-clauses in a mode-language. Even if
        this were not the case, it may
        be impractical to derive all non-simple clauses in the
        manner shown in \autoref{alg:crm}.
\end{itemize}

\autoref{alg:randcrm} describes a randomised implementation to
address this. The procedure also uses the result in the Linear
Derivation Lemma (\autoref{lemma:linderiv} in \autoref{sec:func}) to construct
a CRM structure that first uses the
$\rho_2$ operator, followed by the $\rho_1$ operator.

\begin{algorithm}[!htb]
\SetAlgoNoLine
\LinesNumbered
\KwData{A set $\Phi$ of feature-clauses
         $\{C_1,C_2,\ldots,C_n\}$ with heads p(X) and all other variables are standardised apart from each
other;  
    an activation function $g$; 
    a sample size $s$; 
    bounds $d_{\rho_1}, d_{\rho_2}$ on the depth of
    application of the $\rho_1$ and $\rho_2$ operators}
\KwResult{A CRM structure}
$I = \{v_1,v_2,\ldots,v_n\}$\; 
$V_0 = I$\;
$V = V_0$\;
$E = \emptyset$\;
\For{$i = 1$ to $n$}{
    $\phi(v_i) = (C_i,\mathbf{1})$\;
}
\For{$j = 1$ to $d_{\rho_1} + d_{\rho_2}$}{
    $V_j= \emptyset$\;
    $E_j= \emptyset$\;
    \If{$j \leq d_{\rho_2}$}{
        $op = \rho_2$\;
    }
    \Else{
        $op = \rho_1$\;
    }
    \For{$i=1$ to $s$}{
        \If{$op = \rho_2$}{
            Sample a vertex $v_1$ from $I$ using a uniform distribution
            and sample a vertex $v_2$ from $V_{j-1}$ using the uniform distribution\;
            Create a vertex $v'$ such that $\phi(v')=(C',g)$ where $C'=\rho_2(C_1,C_2)$,
            $\phi(v_1)=(C_1,g)$ and $\phi(v_2)=(C_2,g)$\;
            $V_j=V_j \cup \{v'\}$\;
            $E_j=E_j\cup \{(v_1,v'),(v_2,v')\}$\;
        }
        \Else{
            Sample a vertex $v$ from $V_{j-1}$ using the uniform distribution\;
            Create a vertex $v'$ such that $\phi(v')=(C',g)$ where $C'=\rho_1(C)$ and $\phi(v)=(C,g)$\; 
            $V_j=V_j \cup \{v'\}$\;
            $E_j = E_j \cup\{(v,v')\}$\;
        }
    }
    $V = V \cup V_j$\;
    $E = E \cup E_j$\;
}
$O = \{v: v \in V, \nexists v' ~s.t.~ (v,v') \in E\}$\;
\Return $(V,I,O,E,\phi)$\;
\caption{\textbf{(RandomCRM)} Randomised construction of a CRM structure,
    with linear derivation of feature-clauses.}
\label{alg:randcrm}
\end{algorithm}

In the rest of the paper, we will use the term {\em Simple CRM\/}
 to denote a CRM constructed by either \autoref{alg:crm} or \autoref{alg:randcrm}
 in which the input clauses $C_1, C_2,\ldots, C_n$ are $\Mu$-simple feature-clauses.
 
 \subsubsection{Parameter Estimation}
 \label{sec:parest}
 
 \autoref{alg:randcrm} does not completely
 specify a CRM. Specifically, neither
 the edge-labelling $\psi$ nor $h$ are defined. We now
 describe a procedure that obtains a $\psi$ given
 the partial-specification returned by \autoref{alg:randcrm}
 and a pre-specified $h$
 suitable for the usual task of using the neural network for
 function approximation.  That is, given a partial
 specification of an unknown function
 $\delta:{\cal A} \to {\cal Y}$
 in the form of sample data
 $D = \{(a_i,y_i)\}_1^N$. We want the 
 the neural network to
 construct an approximation
 $\hat{\delta}:{\cal A} \to {\cal Y}$
 that is reasonably consistent with $D$.
In order to estimate the goodness of the approximation, we
need to define a {\em loss\/} function, that computes the penalty
of using $\hat{\delta}$. We will take $\hat{\delta}$ to be
synonymous with $h$, the computation function of the CRM.
Recall $h:{\mathbb{R}}^{|O|} \to \mathbb{R}^n$ for some fixed $n$.
In this paper, we will therefore take ${\cal Y} = \mathbb{R}^n$
and define $h$ in the usual manner adopted by neural networks,
namely as a function of ``local'' computations performed at 
each of the $O$ vertices of the CRM.

\begin{definition}[Local Computation in a CRM]
\label{def:local}
Let $(V,I,O,E,\phi,\psi)$
be a partially-specified CRM, where
$O = \{o_1,o_2,\ldots,o_k\}$. For each vertex $v_{i} \in V$  let
$\phi(v_i) = (C_i,g_i)$ and for each edge $(v_i,v_j)$ $\psi((v_i,v_j)) = w_{ij}$. 
Let $f_i$ denote $f_{C_i}$. For any
$a \in {\cal A}$ we define
$h_i:{\cal A} \rightarrow \mathbb{R}$ as follows:

\[ h_i(a)= \left\{
\begin{array}{ll}
        f_i(a)&\mbox{ if }v_i \in I \\
        f_i(a)g_i\left(\sum_{(v_j,v_i)\in E} w_{ji} h_j(a)\right)&\mbox{ if }v_i \not \in I \\
\end{array}
\right.
\]
Then $\hat{\delta}(a) = h(h_{o_1}(a),\ldots,h_{o_k}(a))$.
\end{definition}


For a multi-class classification task, function $h$ computes
the probability distribution over the classes,
for example, a $\mathtt{softmax}$ function.
Similarly, for a regression task, $h$ computes a real number,
for example, a $\mathtt{linear}$ function.

\autoref{alg:traincrm} estimates the parameters of
the neural network using a standard weight-update
procedure based on stochastic gradient descent (SGD)~\citep{rumelhart1986learning,goodfellow2016deep}, given the
structure obtained from \autoref{alg:randcrm},
a pre-defined computation function $h$, and a loss function $L$.
 
\begin{algorithm}[!htb]
\SetAlgoNoLine
\LinesNumbered
\KwData{A CRM structure $\gamma=(V,I,O,E,\phi)$, 
    a dataset $D = \{(a_i,y_i)\}_{1}^{N}$, where $a_i \in {\cal A}$ and $y_i \in \mathbb{R}^n$,
    a computation function $h: {\mathbb{R}}^{|O|} \to \mathbb{R}^n$,
    a loss function $L: \mathbb{R}^n \times \mathbb{R}^n \to \mathbb{R}$.
}
\KwResult{A CRM}
Let $O = \{o_1,\ldots,o_k\}$\;
Initialise $\psi$\;
\While{stopping criterion is not met}{
    Randomly draw an instance $(a_i,y_i)$ from $D$\;
    Let $\hat{y_i} = h(h_{o_1}(a),\ldots,h_{o_k}(a))$ (see \autoref{def:local})\;
    $Error = L(y_i,\hat{y_i})$\;
    Update $\psi$ using SGD to minimise $Error$\;
}
\Return $(V,I,O,E,\phi,\psi,h)$\;
\caption{\textbf{(TrainCRM)} Parameter estimation of a CRM, given
its structure, using stochastic gradient descent (SGD).
The training is done until some stopping criterion is reached, which refers to the condition when
the number of training epochs reaches some pre-specified
maximum value.
}
\label{alg:traincrm}
\end{algorithm}
 
\subsubsection{Predictions and Explanations}
\label{sec:expl}

We denote the prediction of a  CRM $\gamma = (V,I,O,E,\phi,\psi,h)$ 
by $\hat{\delta}(a)= h(h_{o_1}(a),\ldots,h_{o_k}(a))$,
where $O = \{o_1,\ldots,o_k\}$
and the $h_{o_i}$ are as defined in \autoref{def:local}.

The association of feature-clauses with every vertex of the CRM allows
us to construct ``explanations'' for predictions.
For this we introduce the notion of ancestral graph of a vertex and explanation graph of an output vertex in a CRM.

\begin{definition}[Ancestral Graph of a Vertex]
\label{def:anc}
Let $\gamma = (V,I,O,E,\phi,\psi,h)$ be a CRM.
The set of ancestors of a vertex $v \in V$ in $\gamma$, denoted by $Ancestors(v,\gamma)$, is defined as follows: 
\[
    Ancestors(v,\gamma) = 
        \left\{\begin{matrix}
        \{v\} & \mathrm{if}~ v \in I\\ 
     \bigcup_{(u,v) \in E} Ancestors(u,\gamma) \cup \{v\} & \mathrm{ otherwise}
        \end{matrix}\right.
\]

The ancestral graph of a vertex $v$ in $\gamma$ is $(V',E')$ where $V'=Ancestors(v,\gamma)$ and $E' = \{(u',u''): u', u'' \in V', (u',v'')$ in $E\}$. 

\end{definition}

\begin{definition}[Explanation Graph]
\label{def:expl}
Let $\gamma = (V,I,O,E,\phi,\psi,h)$ be a CRM, and $a \in {\cal A}$
be a data instance. Let $O = \{o_1,\ldots,o_k\}$
and let $\hat{\delta}(a) = h(h_{o_1}(a),\ldots,h_{o_k}(a))$
be the prediction of the
CRM for $a$. For $o_i \in O$, let
$C_i$ be the feature-clause associated with $o_i$
(that is, $\phi(o_i) = (C_i,\cdot)$). Let $f_i$ be
the corresponding feature-function (as defined in \autoref{sec:func}), and
$(V',E')$ be  an ancestral graph of $o_i$ in $\gamma$.
Then the explanation graph of $a$ from vertex $o_i$, denoted by ${Explain}_{\gamma,o_i}(a)$,  is as follows:
\[ Explain_{\gamma,o_i}(a)= \left\{\begin{matrix}
         (V',E',\phi') & \mathrm{if}~ f_i(a)=1\\ 
         \emptyset & \mathrm{ otherwise}
        \end{matrix}\right.
\]
where
$\phi': V' \rightarrow {\cal F}_\Mu$ is a vertex-labelling
function.
$\phi'(v) = C\theta_a$, where $\theta_a$
    the substitution $\{X/a\}$ for the variable $X$ in the head literal
    and $\phi(v) = (C,\cdot)$.
\end{definition}

\begin{remark}
${Explain}_{\gamma,o_i}(a)$ consists
of a (labelled) tree of feature-clauses extracted from
the derivation graph of feature-clauses. The root
of the tree is the feature-clause at $o_i$
and sub-trees contain simpler feature-clauses. If the
CRM is a Simple CRM, then the leaves
of the explanation-tree are $\Mu$-simple feature-clauses.
\end{remark}

\begin{example}
\label{ex:trainexpl}
In the Trains problem, suppose the data instance is the train
shown on the left below. The explanation graph, associated with an
output vertex of the CRM is shown on the right.

\smallskip
\begin{center}
\begin{tabular}{m{0.3\linewidth}lm{0.5\linewidth}} \\
\includegraphics[height=2.5cm]{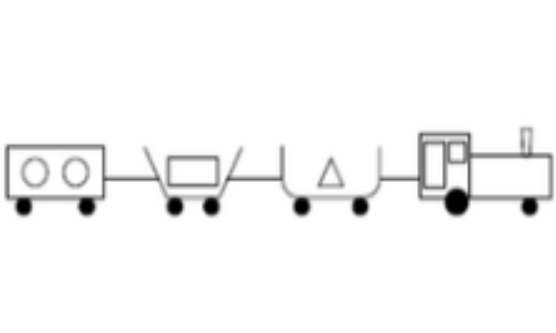} & \hspace*{0.5cm} &
\includegraphics[height=5cm]{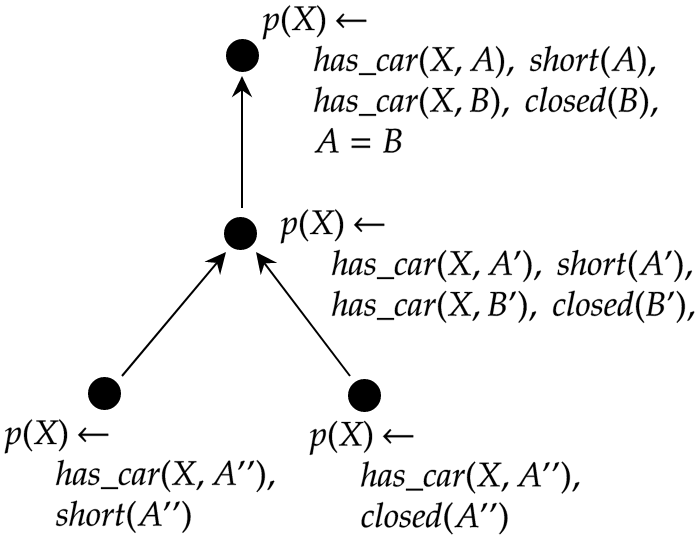} \\
(Train $t1$)       & & (additionally requires the substitution $\{X/t1\}$ to be applied)
\end{tabular}
\end{center}

By definition, we know
that the feature-function value associated with
$p(t1) \leftarrow has\_car(t1,A),$ $short(A),$
$has\_car(t1,B),$ $closed(B)$ has the value $1$.
Also, we know that feature-function values with all other
clauses in the explanation will also be $1$.
\end{example}

\smallskip
\noindent
The notion of an explanation graph from a vertex
extends naturally to the explanation graph from a set of vertices
which we do not describe here. It will be useful in what follows
to introduce the notion of
a feature-clause being ``contained in an explanation graph''.

\begin{definition}[Feature-clause Containment]
\label{def:contain}
Let $\gamma$ be a CRM, $o$ be an output vertex of $\gamma$,
Let $a$ be a data instance,
Let ${Explain}_{\gamma,o}(a)=(V,E,\alpha)$, and
$F_{\gamma,o}(a) =\{ \alpha(v) ~|~ v \in V\}$
be the set of feature-clauses in the explanation graph.
We will say a feature-clause $C$ is contained in
 ${Explain}_{\gamma,o}(a)$, or
$C \sqsubseteq {Explain}_{\gamma,o}(a)$ iff
there exists  $C' \in F_{\gamma,o}(a)$ s.t. $C\theta_a \equiv C'$
(where $\theta_a = \{X/a\}$, is a substitution for the variable
$X$ in the head of $C$).
\end{definition}

\noindent
(This naturally extends to the containment of a set of clauses.)

\begin{example}
\label{ex:contain}
The feature-clause $C:$ $p(X) \leftarrow$ $has\_car(X,Y),$ $short(X),$ $closed(X)$ is contained in the explanation graph shown for train $t_1$
in \autoref{fig:trainsexpl} because:

\begin{itemize}
    \item $F_{\gamma,o}(t_1)$ $=$ $\{C_1,C_2,C_3,C_4\}$ where:
            \begin{enumerate}
                \item[] $C_1:$ $p(t_1) \leftarrow (has\_car(t_1,A),$
                    $short(A),$ $has\_car(t_1,B),$
                    $closed(B),$  $A = B)$;
                \item[] $C_2:$ $p(t_1) \leftarrow (has\_car(t_1,A'),$
                    $short(A'),$ $has\_car(t_1,B'),$
                    $closed(B'))$;
                \item[] $C_3:$ $p(t_1) \leftarrow (has\_car(t_1,A''),$
                    $short(A''))$; and
                \item[] $C_4:$ $p(t_1) \leftarrow (has\_car(t_1,A''),$
                    $closed(A''))$
            \end{enumerate}
    \item  With $\theta_a = \{X/t_1\}$,  $C\theta_a$ = 
        $(p(t_1) \leftarrow (has\_car(t_1,A),$
                    $short(A),$ $closed(A))$; and
    \item $C\theta_a \equiv C_1$     
\end{itemize}
\end{example}

\subsection*{Explanatory Fidelity}
\label{sec:expfid}

Explanatory fidelity
refers to how closely the CRM's explanation matches the ``true
explanation''. Of course, in practice, explanatory fidelity
will be a purely notional concept, since the true explanation will
not be known beforehand. However it is useful for
us to calibrate the CRM's explanatory performance when
it is used for problems where true explanations are known
(the synthetic problems considered in experiments below are
in this category).

For a prediction $\hat{\delta}(h_{o_i}(a),\ldots,h_{o_k}(a))$
by a CRM, suppose we have a relevance ordering over
the output vertices $o_1,\ldots,o_k$. Let $o^*$ be
the most relevant vertex in this ordering. Then we will
call the explanation graph from $o^*$
as the {\em most-relevant explanation graph\/} for
$a$ given the CRM.\footnote{
In implementation terms, one way to obtain such a relevance
ordering over output vertices of the CRM is to use 
the $h_i(\cdot)$ values for vertices in
$O$ to select a vertex $o^*$ that has the highest
magnitude (this is the same as selecting
the best vertex after one
iteration of the layer-wise relevance propagation, or LRP~\citep{binder2016layer},
procedure).}

For a classification task, we use clause containment and
the most-relevant explanation graph
to arrive at a notion of explanatory
fidelity of a CRM to a set of feature-clauses ${\cal T}_c$ that
are known to be  `acceptable' feature-clauses for
class $c$ (if no such acceptable clauses exist for
class $c$, then ${\cal T}_c = \emptyset$).
Let $\gamma$ be a CRM used to predict the class-labels 
for a set of data-instances. For any instance $a$,
let $o^*$ denote the most relevant output vertex of the CRM.
We will say that a data instance $a$ is
consistently explained iff:
(i) the CRM predicts that $a$ has the class-label $c$; and
(ii) there exists a $C \in {\cal T}_c$ s.t.
$C \sqsubseteq {Explain}_{\gamma,o^*}(a)$; and
(iii) for $c' \neq c$, there does not exist $C' \in {\cal T}_{c'}$
    s.t. $C' \sqsubseteq {Explain}_{\gamma,o^*}(a)$.

Given a set of data-instances $E$, let
$CE$ denote the set of instances in $E$ explained consistently
and $IE$ denote the set of instances
in $E$ not explained consistently. Then the
explanatory fidelity of the CRM (correctly, this is only
definable w.r.t. the ${\cal T}_c$'s) is taken to be $\frac{|CE|}{|CE|+|IE|}$,
provided $(|CE| + |IE|) \neq 0$ (and undefined otherwise).

\subsection{CRMs as Explanation Machines}

CRMs can be used as `explanation machines' for black-box predictors
that do not intrinsically include an explanatory component. 
The approach, sometimes called 
{\em post hoc\/} explanation generation, is
shown in \autoref{fig:crmx}.

\begin{figure}
    \centering
    \includegraphics[width=0.7\linewidth]{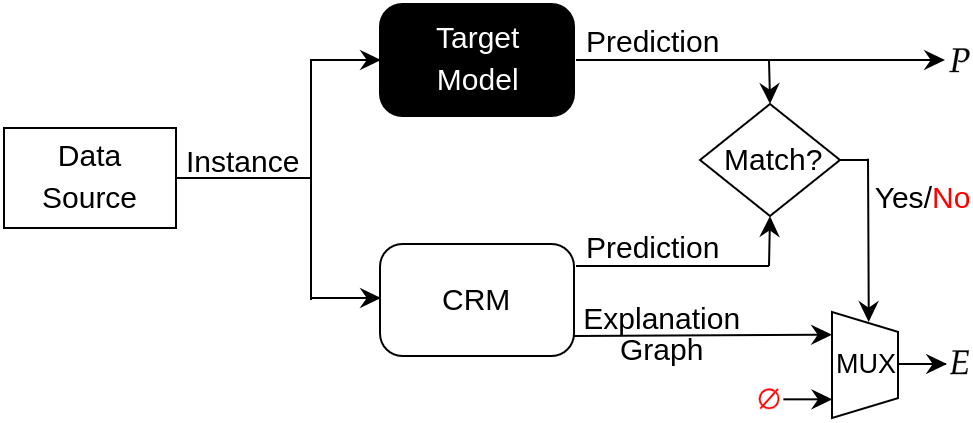}
    \caption{Using a CRM as an explanation machine.
        The target model is a ``black box'' that does not
        have an explanatory output. The CRM model is trained using
        training data labelled with the prediction from the black box
        (and not the `true label'). The multiplexer (MUX) selects 
        between the CRM's explanation graph and the ``empty'' explanation
        $\emptyset$ depending on whether the CRM's prediction
        does or does not match the target model's prediction.
        By using the setup as shown here, we are able to get a
        prediction $P$ and a corresponding explanation $E$.}
    \label{fig:crmx}
\end{figure}

To assess the utility of using the CRM in this manner, we will
change the usual assessment of predictive
accuracy to one of `predictive fidelity', which
refers to how closely the CRM matches the prediction
of the target model.

\section{Empirical Evaluation of CRMs as Explanation Machines}
\label{sec:expt}

\subsection{Aims}
\label{sec:aim}
We consider two kinds of experiments with Simple CRMs:
\begin{description}
    \item[Synthetic data.] Using tasks for which both target-model
        predictions and acceptable feature-clauses are available,
        we intend to investigate the hypothesis
        that:
        (a) Simple CRMs can construct models with high predictive
        fidelity to the target's prediction; and
        (b) Simple CRMs have high explanatory fidelity to the set of
        acceptable feature-clauses.
    \item[Real data.] Using real-world datasets, for which we have predictions
        from a state-of-the-art black-box target model, we investigate the
        hypothesis that Simple CRMs can construct models with
        high predictive fidelity to the target's prediction.
        We also provide illustrative
        examples of using the CRM to provide explanations for
        the predictions.
\end{description}

\noindent
We clarify what is meant by `acceptable feature-clauses' for the synthetic data
in \autoref{sec:methods}. For real data, 
the target-model is the
state-of-the-art (SOTA, which in this case is a graph-based neural network).
That is, the CRM 
is being used here to match the SOTA's predictions
(and not the `true value'), and to
provide proxy explanations. No acceptable feature-clauses are
known for classes in the real data.\footnote{
    The CRM can of course be used to predict the true value directly.
    We will comment on this later, but that
    is not the primary goal of the experiment here.}

\subsection{Materials}
\label{sec:mat}

\subsubsection{Data and Background Knowledge}

\begin{description}
    \item[Synthetic Data.] 
        We use two well-known synthetic datasets.
        The first dataset is the ``Trains'' problem of discriminating between
        eastbound (class = $+$) and westbound trains (class = $-$) \citep{michalski1980pattern}.
        The original data consists only of 10 instances (5 in each class). We generate
        a dataset of 1000 instances with a class-disitribution of approximately
        50\% $+$ and 50\% $-$, using the
        data generator \citep{michie1994international}. We
        use 700 instances  as training data and 300 instances as
        test-data.
        The second dataset consists of the task of discriminating between
        illegal (class = $+$) and legal (class = $-$) chess positions in the 
        King-Rook-King endgame \citep{bain1994learning,michie1976king}. The class-distribution is
        approximately 33\% $+$ and 67\% $-$. We use 10000 instances of board-positions
        as training data and 10000 instances as test-data. Examples of $+$
        instances are showed pictorially in \autoref{fig:expics}.
    
    \begin{figure}[htb]
        \begin{center}
            \begin{tabular}{ccc} \\
                \underline{Trains}& \hspace*{1cm} & \underline{Chess} \\ 
                \includegraphics[height=2.5cm]{train_example1.png} & & \includegraphics[height=3.5cm]{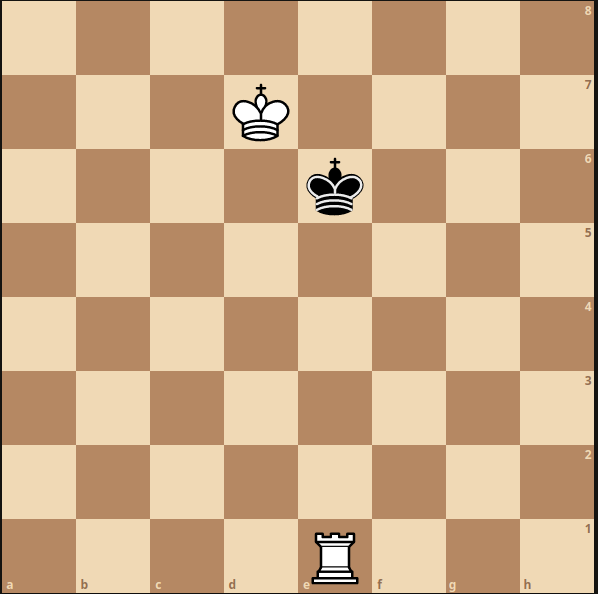} \\
            \end{tabular}
        \end{center}
    \caption{Pictorial examples of positive instances in the
    synthetic data. The actual data are logical encodings of
    examples like these. The instance on the left is
    an example of a train classified as ``eastbound'' ($+$).
    The instance on the right is of a board position
    classified
    as ``illegal'' ($+$), given that it is White's turn
    to move. For both problems, we have a target model
    that is complete and correct. We also know a set
    of feature-clauses that are acceptable as explanations
    for instances that are correctly predicted as 
    $+$.}
    \label{fig:expics}
    \end{figure}

    \item[Real Data.]
    Our real data consists of 10 datasets obtained 
    from the NCI\footnote{The National Cancer Institute (\url{https://www.cancer.gov/})}.
    Each dataset represents extensive drug evaluation with the concentration parameter GI50, 
    which is the concentration that results in 50\% growth inhibition of cancer cells~\citep{marx2003data}. 
    A summary of the dataset is presented in \autoref{fig:nci}.
    Each relational data-instance in a dataset describes a chemical compound
    (molecule) with atom-bond representation: a set of bond facts.
    The background knowledge consists of logic programs 
    defining almost $100$
    relations for various functional groups (such as amide, amine,
    ether, etc.) and various ring structures (such as aromatic, non-aromatic etc.).
    There are also higher-level domain-relations that determine
    presence of connected, fused structures.
    Some more details on the background knowledge can be seen in
    these recent studies:~\citep{dash2021incorporating,dash2022inclusion}.
    \begin{figure}[!htb]
        \centering
        \begin{tabular}{|c|c|c|c|c|}
        \hline
        \# of & Avg. \# of & Avg. \# of & Avg. \# of & \% of \\
        datasets & instances & atoms per instance & bonds per instance & positives \\
        \hline
        10 & 3018 & 24 & 51 & 50--75 \\
           &      &    &    & Avg.$=57$ \\
        \hline
    \end{tabular}
    \caption{Summary of the NCI-50 datasets (Total no. of instances is approx. 30,200). The graph neural network predictor described
    in \citep{dash2021incorporating} is taken as the target model.
    No acceptable feature-clauses are known for these tasks.}
    \label{fig:nci}
    \end{figure}
    
\end{description}

\subsubsection{Algorithms and Machines}

We use the ILP system Aleph~\citep{srinivasan2001aleph} for constructing the feature-clauses.
The CRMs are implemented using PyTorch~\citep{paszke2019pytorch}. The parameter
learning of CRMs has been done with the autograd engine
available within PyTorch for the implementation of backpropagation.
Our implementation of Layerwise-Relevance Propagation (LRP) is 
based on \citep{bach2015pixel,binder2016layer}. 

The CRM implementation and all our experiments are conducted 
on a workstation running with Ubuntu (Linux) operating system,
64GB main-memory, 
and a CPU running with 12 Intel Xeon processors.

\subsection{Method}
\label{sec:methods}

The experiments are in two parts: an investigation on synthetic
data to examine the predictive performance and explanatory
fidelity of CRMs; and an investigation on real data, to compare
the predictive performance of CRMs against state-of-the-art deep
networks. Some examples of explanations are also provided for the explanations
generated by a CRM on real data. We describe the method used for each part
in turn. 

\subsubsection{Experiments with Synthetic Data}

For both synthetic datasets, we have access to symbolic descriptions of
the true concepts involved. The `target model' in each case is taken
to be equivalent to a classifier that labels instances consistent with
the corresponding true concept. This allows us to
judge the fidelity of explanations generated.  The  method used in
each case is straightforward:

\begin{enumerate}
\item[] For each problem:
    \begin{enumerate}[label=(\alph*), leftmargin=1.5\parindent]
        \item Construct the dataset $D$ of instances labelled by the target model;
        \item Generate a subset of $\Mu$-simple feature-clauses in the
                mode-language for the problem;
        \item Randomly split $D$ into training and test samples;
        \item Construct a CRM using \autoref{alg:randcrm} with
            the $\Mu$-simple features. The weights
            for the CRM are obtained using the training data and
            the SGD-based weight-update steps described in
            \autoref{alg:traincrm}
            (see below for additional details);
        \item Obtain an estimate of the predictive and
            the explanatory fidelity of the CRM using the test
            data (again, see below for details).
    \end{enumerate}
\end{enumerate}

\noindent
The following additional details are relevant to the method
just described:

\begin{itemize}
        \item  For both datasets, the composition
        depth of CRMs is at most 3. Also, the mode-declarations for Chess allow the occurrence
        of equalities in $\Mu$-simple features
        (see \autoref{app:expt}), additional compositions using $\rho_1$
        are not used in this problem;
        \item We use the rectified linear ($\mathtt{relu}$) activation function for the 
        local computation in the neurons of the
        internal (hidden) layers of the CRMs. 
        \item We use Adam optimiser~\citep{kingma2015adam} to minimise the
            training cross-entropy loss between the true classes
            and the predicted classes by the network;
        \item We provide as input feature-clauses only a subset of all possible $\Mu$-simple
            feature-clauses. The subset is constrained by the
            following: (i) At most 2 body literals; (ii)
            Minimum support of at least 10 instances\footnote{
            In principle, increasing the number of input clauses will increase the size (width and breadth)
            of the CRM (measured by the number of layers and neurons in each layer of the CRM). Furthermore, since complex features are more specific than the features represented by simple clauses, the coverage of the complex features (that is, instances for which the features have the value 1) will usually be lower than those of simple clauses. Thus, if we restrict complex features
            to those having a positive coverage of at least $p$, simple features will have also have a coverage of at least $p$. Thus, simple features with
            a lower positive coverage will not be part of any connection in the CRM,
            and therefore need not appear in the inputs.}; 
            and
            (iii) Minimum precision of at least 0.5. All
            subsequent feature-clauses obtained by composition are also required to
            satisfy the same support and precision constraints.
            The learning rate for the
            Adam optimiser is set to $0.001$ while keeping other
            hyperparameters to their defaults within PyTorch; 
        \item The number of training epochs is set to $5$ for the Trains dataset 
            and $10$ for the Chess dataset; 
    \item Both synthetic problems are binary classification tasks.
        We call the classes $+$ and $-$ for convenience.
       Predictive fidelity is estimated in the usual manner,
        namely as the proportion of correctly predicted test-instances;
    \item Explanatory fidelity is estimated as described in \autoref{sec:expl}. 
        For this, we need to pre-define
        sets of feature-clauses that are acceptable in explanations.
        For the synthetic datasets, we are able to identify
        sets of acceptable feature-clauses from the literature:
        These feature-clauses are obtained from a target model
        that is known to be complete and correct
        (see \citep{michie1994international} for the target
        model for Trains and \citep{bain1994learning} for Chess).
        The acceptable clauses in ${\cal T}_+$ are as follows:
            \begin{center}
                {\small{
                \begin{tabular}{|c|l|} \hline
                    Problem & Acceptable Feature Clause\\ \hline
                    Trains  & $p(X) \leftarrow has\_car(X,Y), short(X), closed(X)$ \\ \hline
                    Chess   & $p((A,B,C,D,C,E)) \leftarrow true$ \\
                            & $p((A,B,C,D,E,D)) \leftarrow true$ \\
                            & $p((A,B,A,B,C,D)) \leftarrow true$ \\
                            & $p((A,B,C,D,E,F)) \leftarrow adj(A,E), adj(B,F)$ \\ \hline
                \end{tabular}
                }}
               
            \end{center}
                
    In Trains, the feature clauses apply to descriptions of trains.
        For Chess these are descriptions of the board in an endgame.
        The board is a  6-tuple that denotes the file and
        rank of the position of the White King, White Rook and Black King,
        respectively. In all cases ${\cal T}_{-} = \emptyset$. 
        Explanatory fidelity will be estimated by
        checking clause-containment of the feature-clauses above in
        the explanation graph for the most-relevant output vertex
        of the CRM (see \autoref{sec:expl});
    \item The acceptable feature-clause for the Trains is a direct
        rewrite of the function used to generate the labels.
        For Chess,
        the (set of) feature-clauses are direct rewrites of an approximate symbolic
        description taken from \citep{srinivasan1992distinguishing}. 
        These approximate description isn't 
        identical to the correct description, but is very closely related to it
        (the approximation differ from the correct description
        only in about 40 of 10,000 cases). For our purposes therefore,
        high explanatory fidelity, w.r.t. the set of feature-clauses shown,
        will taken to be sufficient; and
    \item We provide a baseline comparison for predictive
        fidelity against a `majority class' predictor.
        A baseline comparison is also provided for explanatory
        fidelity against random selection of a feature-clause
        from the set of feature-clauses associated with the output vertices of the CRM
        that have a feature-function
        value of 1 for the data instance being predicted.
\end{itemize}

\subsubsection{Experiments with Real Data}
\label{sec:realexpt}

For the real-world datasets, the current
state-of-the-art predictions are from
a Graph Neural Network (GNN)
constructed using the background knowledge described earlier \citep{dash2022inclusion}.
However, the GNN model constitutes a black-box
model, since it does not produce any explanations for its
predictions. We investigate equipping this black-box model
with CRM model for explanation. The method
is as follows:

\begin{enumerate}
\item[] For each problem:
    \begin{enumerate}[label=(\alph*), leftmargin=1.5\parindent]
        \item Construct the dataset $D$ consisting of
            problem instances and their predictions of the target model;
        \item Generate a subset of $\Mu$-simple feature-clauses in the
                mode-language for the problem. The restrictions
                used for synthetic data
                are used to constrain
                the subset;
        \item Construct a CRM using \autoref{alg:randcrm} with
            the $\Mu$-simple features and the dataset $D$. The weights
            for the CRM are obtained using the training data used
                by the state-of-the-art methods and
                the SGD-based weight-update procedure; and
        \item Obtain the predictive fidelity of the CRM model to
            the predictions of the target model.
    \end{enumerate}
\end{enumerate}

\noindent
The following additional details are relevant:
\begin{itemize}
    \item As with the synthetic data, the compositional
        depth for the CRMs is set to $3$. 
        Again, we do not use $\rho_1$ operations, since the mode-declarations
        allow equalities. The constraints on
        input feature-clauses is the same as those used
        for synthetic data;\footnote{Our choice for the compositional depth of $3$ is also loosely-based on our previous work on Deep Relational Machines (DRMs: \citep{dash2018large}).
        However, we note that a depth higher than $3$ will increase the complexity of a CRM
        (due to increase in number of layers and neurons), which        
        might result in better predictive performance of a CRM. We expect
        that in practice the depth bound will be treated as a hyperparameter and
        subject to the usual forms of hyperparameter optimisation.}
    \item The CRM implementation is the same as the one
        used for synthetic data. We
        perform a grid-search of the learning rate for the
        Adam optimiser using the parameter grid: $\{0.01, 0.001, 0.005, 0.0001\}$.
        The total number of training epochs is $10$,
        with early-stopping mechanism~\citep{prechelt1998early} with a patience period of $3$;
    \item As with the synthetic data, we provide a baseline comparison against the `majority
        class' predictor;
    \item Unlike the synthetic data, no pre-defined set of acceptable feature-clauses
        exists, and therefore no estimate of explanatory fidelity is
        possible. Correspondingly, there is no baseline provided either.
\end{itemize}

\subsection{Results}
\label{sec:results}

\autoref{fig:results} tabulates the results used to
compute estimates of predictive and explanatory fidelity on synthetic
and real data.
The main details
in these tabulations are these:
(a) For the synthetic data, Simple CRMs models 
    able to match the target's prediction perfectly
    (predictive fidelity of 1.0);
(b) The high explanatory fidelity values show that for instances
    labelled $+$, 
    the maximal explanation for the most-relevant
    vertex contains at least 1 clause from the set of acceptable feature-clauses;
    and for instances labelled $-$, the maximal explanation of the most relevant
    vertex does not contain any clauses from the target theory; and
(c) On the real datasets predictive fidelity of CRMs is reasonably high:
    suggesting that about 81\% of the time, the CRM's prediction will
    match that of the state-of-the-art model.

\begin{figure}[!htb]
    \begin{minipage}{0.5\textwidth}
    \centering
    \begin{tabular}{|c|c|c|c|c|}
        \hline
        Dataset & \multicolumn{4}{|c|}{Fidelity} \\ \cline{2-5}
                & \multicolumn{2}{|c|}{CRM} & \multicolumn{2}{|c|}{Baseline} \\ \cline{2-5}
                & Pred. & Expl. & Pred. & Expl. \\ \hline
        Trains  & 1.0 & 1.0 & 0.5 & 0.4 \\
        Chess   & 1.0 & 0.9 & 0.7 & 0.7\\
        \hline
    \end{tabular}
    \caption*{(a) Synthetic data}
    \end{minipage}
     \begin{minipage}{0.5\textwidth}
    \centering
    \begin{tabular}{|c|c|c|}
        \hline
        Dataset & \multicolumn{2}{|c|}{Pred. Fidelity} \\ \cline{2-3}
                &  CRM & Baseline \\ \hline
        786\_0	& 0.77 & 0.53 \\
        A498	& 0.79 & 0.59 \\
        A549\_ATCC & 0.85 & 0.63 \\
        ACHN	& 0.73 & 0.58 \\
        BT\_549	& 0.78 & 0.51 \\
        CAKI\_1	& 0.81 & 0.69 \\
        CCRF\_CEM & 0.82 & 0.68 \\
        COLO\_205 & 0.77 & 0.53 \\
        DLD\_1	& 0.90 & 1.00 \\
        DMS\_114 & 0.89 & 0.91 \\ \hline
	    Avg. & 0.81 (0.05) & 0.66 (0.17) \\
        \hline
    \end{tabular}
    \caption*{(b) Real data}
    \end{minipage}%
    \caption{Estimates of fidelity
        for Simple CRMs on the synthetic datasets and real datasets.
        Explanatory fidelity is assessable on synthetic data since
        we have access to the ``correct'' explanation. Baseline
        for prediction is the majority class predictor. For explanations,
        Baseline refers to random selection of a feature-clause from the set of feature-clauses associated with the output vertices of the CRM
        with function value $1$ for the data instance being predicted
        by the majority class predictor.}
    \label{fig:results}
\end{figure}

\noindent
We now turn to examine the results in greater detail.

\subsubsection{Predictive Fidelity}

Although we obtain perfect predictive fidelity to the target model
on synthetic data, fidelity on the real datasets clearly
has room for improvement. Improvements in fidelity
are possible simply by considering ensembles of CRMs,
obtained simply due to the sampling variation arising
in Step~(c) (refer \autoref{sec:realexpt}). Below, we tabulate changes in predictive fidelity 
on 1 of the real-world problems (786\_0), using a sample consisting
of upto 3 CRMs. With multiple CRMs, for a data-instance
to be correctly predicted it is sufficient
for any one of the CRMs to predict the same class as the target-model.
Recall the primary purpose of the CRM is to explain the target-model's prediction
in terms of its feature-clauses. Any CRM that matches the target-model's prediction
can be used to explain the prediction. More on this under ``Explanation''
below.

\begin{center}
  \begin{tabular}{|c|c|} \hline
        No. of & Predictive \\
        CRMs & Fidelity \\  \hline
        1 & 0.75 \\
        2 & 0.83 \\
        3 & 0.85  \\ \hline
    \end{tabular}
\end{center}

\subsubsection{Explanatory Fidelity} 

For the synthetic datasets
we show below in \autoref{fig:trainsexpl} a representative $+$ instance 
(shown pictorially for ease of
understanding), along with the 
predictions of both target and the CRM.
The last column shows an acceptable feature-clause along 
with a stylised English translation. In both instances,
an equivalent form of the acceptable feature-clause is
contained in the CRM's explanation graph.

\begin{figure}[!htb]
\centering
\begin{sideways}
    \begin{tabular}{m{0.23\linewidth}m{0.5\linewidth}m{0.45\linewidth} } 
    \hline
    \multicolumn{1}{c}{Instance} & \multicolumn{1}{c}{Explanation Graph} & \multicolumn{1}{c}{Acceptable Feature Clause} \\
    \hline
    \includegraphics[width=\linewidth]{train_example1.png} & 
    \includegraphics[width=0.68\linewidth]{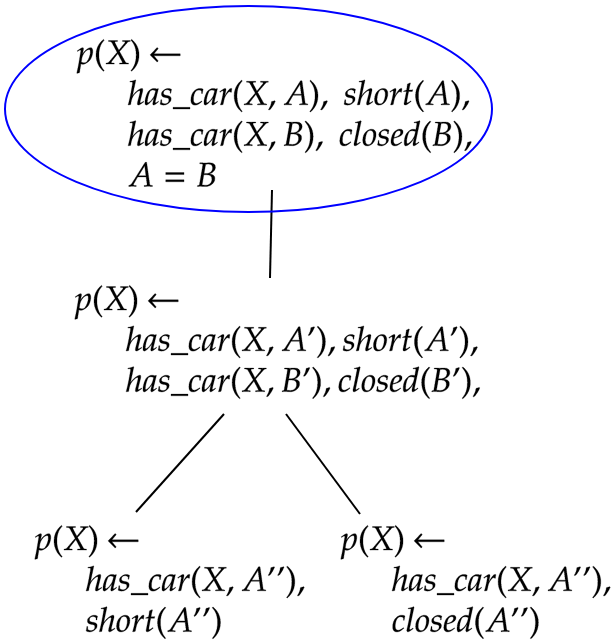} & 
    $p(X) \leftarrow has\_car(X,Y), short(Y), closed(Y)$ \\
     & & Train $X$ has a car $Y$ and $Y$ is short and closed. \\
    Train $t_1$ & (With the substitution $\{X/t1\}$) & \\

    \hline
    & & \\
    \includegraphics[width=\linewidth]{chess_example1.png} &
    \includegraphics[width=\linewidth]{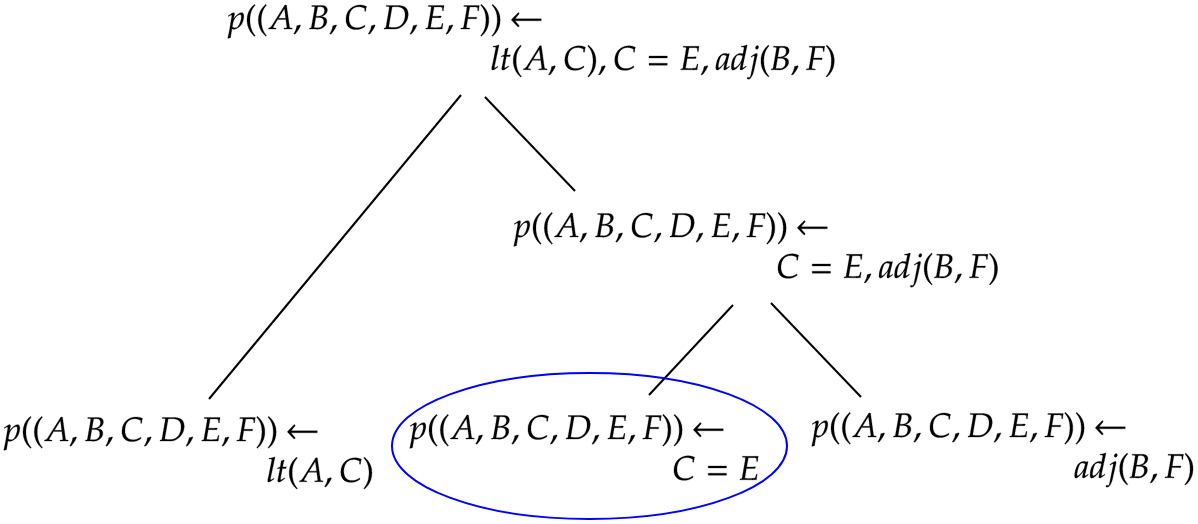} &  $p((A,B,C,D,C,E)) \leftarrow $ \\
    & & White Rook and Black King are on the same file (column) \\
        Board $(d,7,e,1,e,6)$ & (With the substitution $\{A/d, B/7, \ldots, F/6\}$) & \\

    \hline
    \end{tabular}
\end{sideways}
\caption{Explanations from the CRMs for a $+$ prediction by both target-model and
the CRM for two instances in the synthetic data. In both
cases, the CRM's explanation graph contains an acceptable feature-clause (circled).}
\label{fig:trainsexpl}
\end{figure}

For the Chess data, the CRM's explanatory fidelity is less than $1$.
This means that there are instances for which the CRM's explanation graph does
not contain an acceptable feature-clause. We discover 19 different kinds of
`buggy' explanations are found by the CRM: a full listing
is in \autoref{app:expt}. Here
we provide illustrative instances of two kinds of errors:
those that are close to the correct explanation; and those that
are an artifact of the specific data instance being
explained (see \autoref{fig:chesswrong}). Besides these, in many cases, we find
explanation errors arise from the fact that definitions in background knowledge
of file and rank adjacency hold when ranks and files are the same
(that is, if $A$ and $B$ are
ranks (or files) and $A = B$, then $adj(A,B)$ is true).\footnote{M. Bain,
the author of the background definitions, confirms
that this is the intended meaning of the $adj/2$ predicate for
this problem (personal communication).} In many instances
inconsistent explanations result from feature-clauses contain literals that
use equality instead of adjacency (that is, the CRM's explanation
contains $A=E$, rather than $adj(A,E)$: see \autoref{app:debug} in \autoref{app:expt}).
However, even accounting for this,
the CRM's explanations can be more specific than the correct explanation;
and in some cases, incorrect (an example of each is shown in \autoref{fig:chesswrong}).

\begin{figure}[!htb]
{\small{
\begin{center}
    \begin{tabular}{m{0.2\linewidth}m{0.45\linewidth}m{0.25\linewidth} } 
    \hline
    \multicolumn{1}{c}{Instance} & \multicolumn{1}{c}{Explanation} & \multicolumn{1}{c}{Acceptable} \\
     & \multicolumn{1}{c}{Graph}& \multicolumn{1}{c}{Feature-Clause} \\ 
    \hline
    & & \\
    \includegraphics[width=\linewidth]{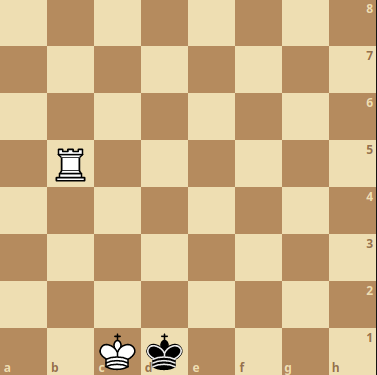} & 
    \includegraphics[width=\linewidth]{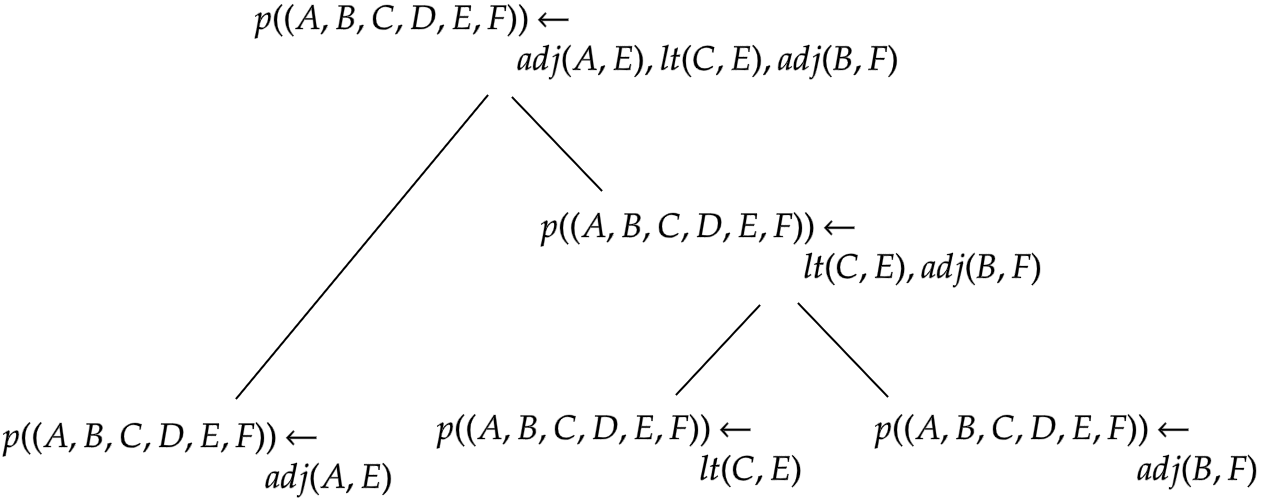} & \\
    & & $p((A,B,C,D,E,F)) \leftarrow \textcolor{white}{aaa} adj(A,E), adj(B,F)$ \\
    \includegraphics[width=\linewidth]{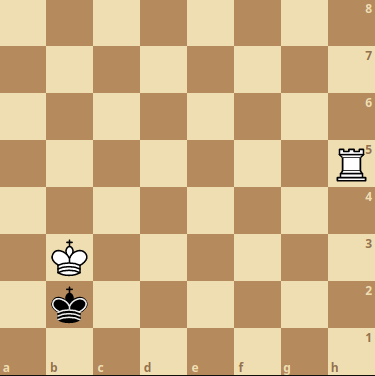} & 
    \includegraphics[width=\linewidth]{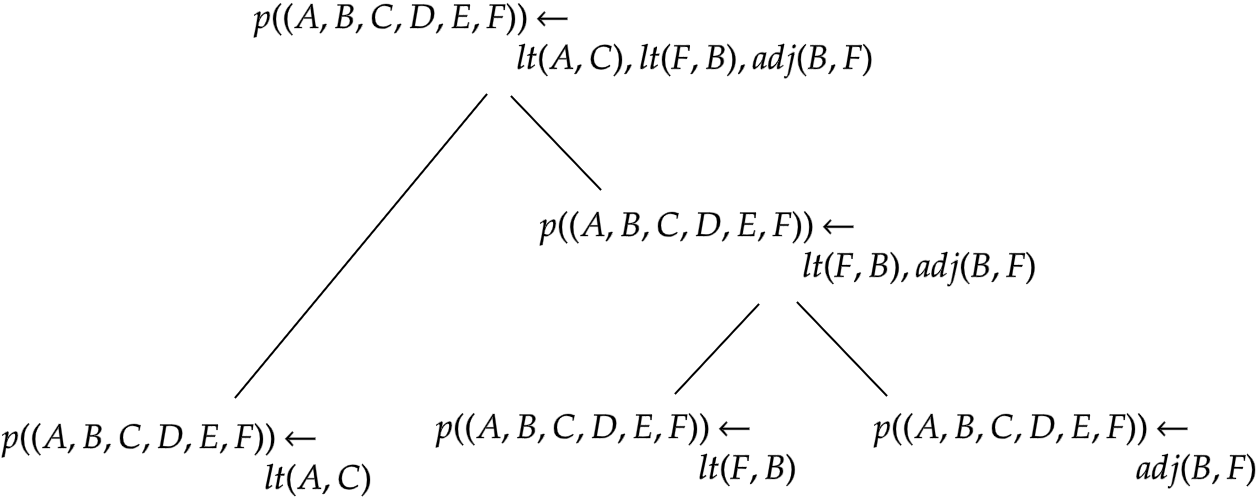} & \\
    & & White King's file is adjacent to Black King's file and White King's rank is adjacent to Black King's rank \\
    \hline
    \end{tabular}
\end{center}
}}
\caption{Examples where the target model and the CRM's prediction are both $+$, but
the CRM's explanation graph does not contain an
acceptable feature-clause. For simplicity, we do not show the
substitutions for $A \ldots F$.}
\label{fig:chesswrong}
\end{figure}


What about explanations on real data? At this point, we do not have any
independent source of acceptable feature-clauses for this data.
We nevertheless show a representative example of the explanation for
a test-instance (see \autoref{fig:realexpl}).

\begin{figure}[!htb]
{\small{
\begin{center}
    \begin{tabular}{m{0.33\linewidth}m{0.56\linewidth} } 
    \hline
    \multicolumn{1}{c}{Instance} & \multicolumn{1}{c}{Explanation Graph} \\
    \hline
    \includegraphics[height=2cm]{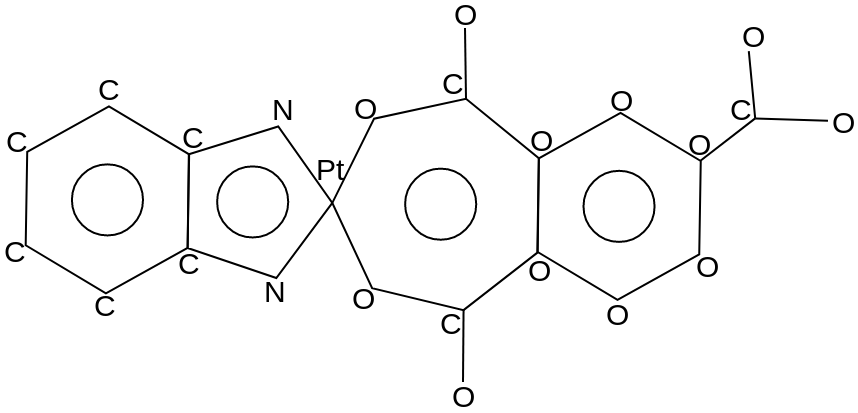} &
    \includegraphics[height=4.7cm]{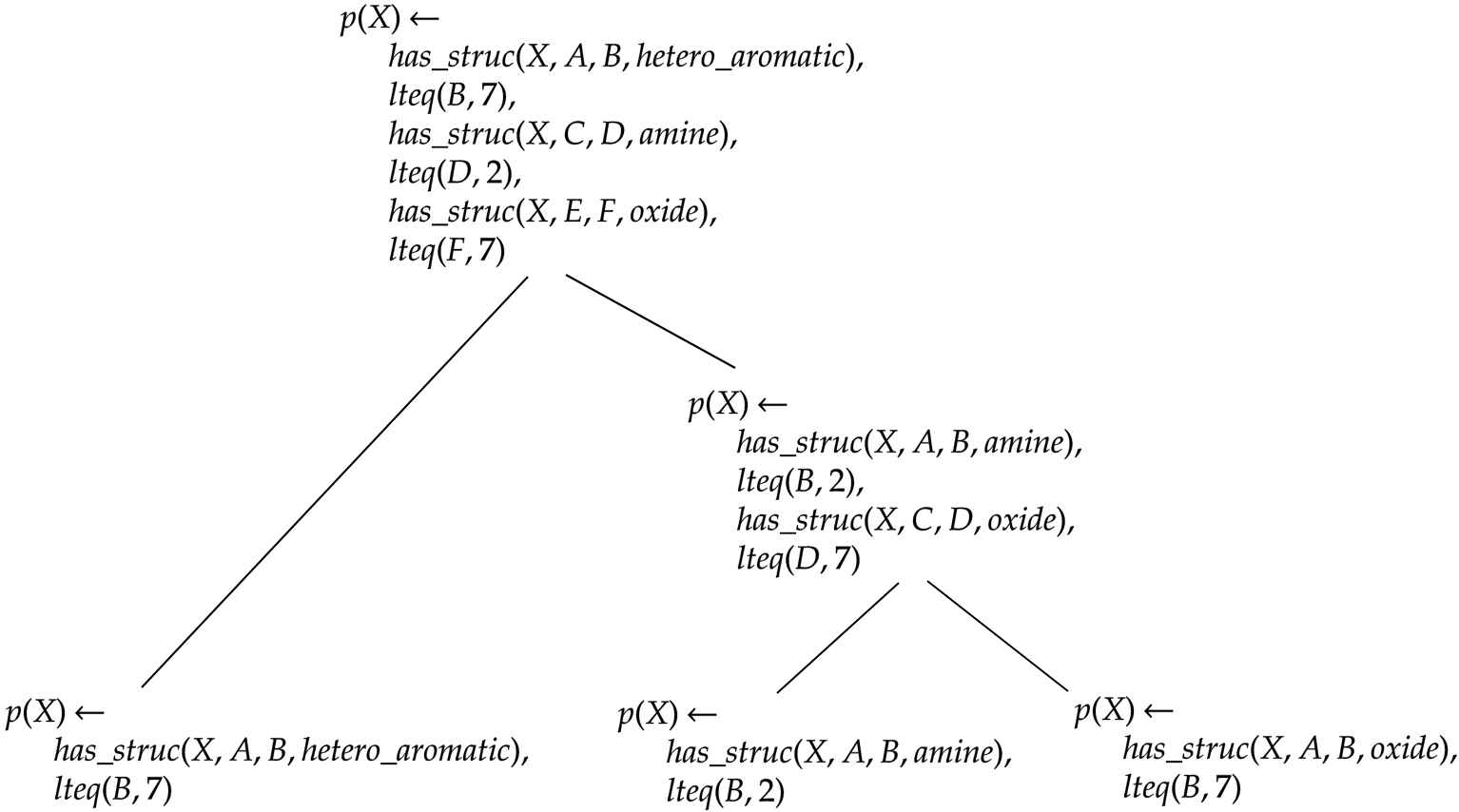} \\
    \hline
    \end{tabular}
\end{center}
}}
\caption{A CRM explanation for a prediction on the real data.
The class-label predicted by both CRM and the black-box model for this data instance is $+$.}
\label{fig:realexpl}
\end{figure}


We close this examination by drawing the reader's attention
to an important aspect of a CRM's
explanation. The feature-clauses are defined in terms
of relations provided as prior knowledge. This makes
them potentially intelligible
to a person familiar with the meanings of these relations.
This makes it easier--in principle at least--to perform
a human-based assessment of the feature-clauses in the
explanation graph (this is apparent from the `debugging' of
explanations that we have been to accomplish with the Chess data).

\subsubsection{Additional Results: CRMs as Prediction Machines}

The tabulations of fidelity and the subsequent assessments above provide
a measure of confidence in the use of CRMs as explanation machines. But it
is evident that CRMs can be used as `white-box' predictors in their own
right. We provide an indicative comparison of a CRM predictor against the
state-of-the-art predictors (for the real-data, the prediction is
by majority-vote from an ensemble of 3 CRMs):

\begin{figure}[!htb]
    \centering
   \begin{tabular}{|c|c|c|c|c|c|}
    \hline
    Dataset & \multicolumn{5}{c|}{Predictive accuracy} \\ \cline{2-6}
               & CRM         & GNN         & DRM (500)   & CILP++      & Baseline \\
    \hline 
    786\_0     & 0.66 (0.01) & 0.69 (0.01) & 0.69 (0.01) & 0.67 (0.01) & 0.55 (0.01) \\ 
    A498       & 0.67 (0.01) & 0.72 (0.01) & 0.70 (0.01) & 0.66 (0.01) & 0.52 (0.01) \\
    A549\_ATCC & 0.64 (0.01) & 0.67 (0.01) & 0.70 (0.01) & 0.60 (0.01) & 0.51 (0.01) \\
    ACHN       & 0.64 (0.01) & 0.70 (0.01) & 0.70 (0.01) & 0.64 (0.01) & 0.51 (0.01) \\
    BT\_549    & 0.66 (0.01) & 0.68 (0.01) & 0.70 (0.01) & 0.65 (0.01) & 0.53 (0.01) \\
    CAKI\_1    & 0.63 (0.01) & 0.68 (0.01) & 0.66 (0.01) & 0.64 (0.01) & 0.54 (0.01) \\
    CCRF\_CEM  & 0.65 (0.01) & 0.71 (0.01) & 0.71 (0.01) & 0.68 (0.01) & 0.63 (0.01) \\
    COLO\_205  & 0.60 (0.01) & 0.69 (0.01) & 0.67 (0.01) & 0.66 (0.01) & 0.56 (0.01) \\
    DLD\_1     & 0.69 (0.02) & 0.69 (0.02) & 0.70 (0.02) & 0.72 (0.02) & 0.69 (0.02) \\
    DMS\_114   & 0.68 (0.02) & 0.74 (0.02) & 0.75 (0.02) & 0.75 (0.02) & 0.76 (0.02) \\
    \hline
    \end{tabular}
    \caption{Predictive performance comparison of an ensemble of $3$ Simple CRMs against some
        leading black-box predictors. The numbers are estimates of predictive accuracy
        obtained on test data. The number in parentheses is the estimated standard
        deviation. The techniques being compared  are:
        GNN, the graph-based neural network approach described in \citep{dash2022inclusion};
        DRM ($500$), is a form of MLP called a Deep Relational Machine that has, as input, Boolean
            feature-vectors resulting from a stochastic selection of $500$ relational
            features \citep{dash2019discrete}; and CILP++, an MLP that has input
            Boolean feature-vectors resulting from an exhaustive feature
            construction technique called Bottom-Clause Propositionalisation \citep{francca2014fast}.
            Baseline is the majority class predictor.}
    \label{fig:predresults}
\end{figure}

The results in \autoref{fig:predresults} indicate that Simple CRMs perform
approximately as well as CILP++~\citep{francca2014fast}, 
but are worse than either the GNN~\citep{dash2022inclusion} or DRM~\citep{dash2019discrete}.
However, \autoref{fig:predresults} are best treated as preliminary.
Variations in CRMs arise
in \autoref{alg:randcrm} purely due to sampling, of course.
However, the CRM obtained is
also affected by the following: (1) The subset
constraints on support and precision
all feature-clauses in the CRM; (2) bounds on the
depth of compositions $\rho_2$ followed by $\rho_1$ operators; 
(3) The number of feature-clauses drawn in each layer of the
CRM. Additional variation can arise from the initialisation
of weights for the SGD-based estimation of parameters. This suggests
that substantially more experimentation is needed to see if the predictive performance
of Simple CRMs can be improved. We note also 
that the DRM uses substantially more complex features than the Simple CRM,
and that CILP++ constructs substantially more features than
the Simple CRM (anywhere between 30,000 to 50,000 compared to about 330
$\Mu$-simple features for the CRMs). Of course none of GNN, DRM or
CILP++ have any intrinsic mechanism of associating explanations with
their prediction. 

\section{Related Work}
\label{sec:relworks}

We note first that $\rho_1$ and $\rho_2$ are closely related to the notion of refinement
operators which have been studied extensively in ILP, in the context
of the search through a hypothesis space (see \citep{tamaddoni2009lattice,shanhwiei:ilpbook}).
Our motivation in this paper is, however, in the use of these operators
to derive relational features. Consequently, we 
describe connections to related work in 3 categories:
conceptual work on relational features; implementation and
applied work on propositionalisation in ILP; and
work on explainable deep networks.

On the conceptual understanding of relational features,
perhaps the most relevant single summary is in \citep{saha2012kinds}. There the
authors identify several feature-classes, based on
somewhat similar notions of source- and sink-literals. The relationship between
the different classes in that paper is shown in \autoref{fig:fclasses}(a). The relationship
to these sets of the class of $\Mu$-simple feature-clauses,
denoted here as $F_{\Mu}$,
is shown in \autoref{fig:fclasses}(b) (see \autoref{app:fclasses}
for more details).

\begin{figure}[htb]
\centering
\includegraphics[width=0.6\linewidth]{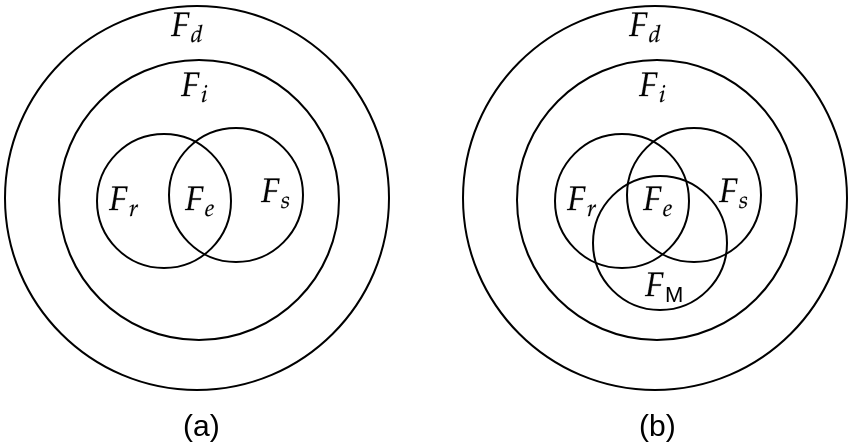}
\caption{The feature classes proposed in \protect{\citep{saha2012kinds}}: (a)
    and the relationship to the class of $\Mu$-simple features (b).}
\label{fig:fclasses}
\end{figure}

 Simple clauses in \citep{eric:thesis} and the corresponding set
 of features in $F_s$ are restricted to determinate
 predicate-definitions.\footnote{
  Informally, a {\em determinate\/} predicate is one whose definition encodes a function. That
  is, for a given set of values for input arguments, there is exactly one set of values for
  output arguments.}
  Results in \citep{eric:thesis} show that features from $F_s$
  can be used to derive the subset of feature-clauses in $F_d$
  that only contain determinate predicate-definitions.
  There is no such restriction imposed on $F_{\Mu}$ and
  all clauses in $F_d$ (and therefore all other classes shown)
  can be derived using some composition of $\rho_1$ and $\rho_2$. No corresponding
  operators or completeness results are known for $F_r$.
  
  Relational features have been shown to be an extremely
  effective form of learning with relational objects \citep{Kramer2001,lavravc2021propositionalization}.
  Methods that construct and use relational features, guided by some form of mode-declarations
  can be found in \citep{srinivasan1999feature,lavravc2002rsd,ramakrishnan2007using,joshi2008feature,specia2009investigation,faruquie2012topic,saha2012kinds,francca2014fast,vig2017investi,dash2018large}. 
  Of these, the features in
  \cite{lavravc2002rsd} are from the feature class $F_r$. There are no 
  reports on the class of features used in the other reports, although
  the procedures for obtaining the features suggest that they are
  not restricted to any specific sub-class (that is, they are simply
  from $F_d$). Given our results on derivation of features in $F_d$
  from features in $F_{\Mu}$, and the class-inclusions shown, we
  would expect at least some features in a super-class would
  require additional composition operations to those in a sub-class.
  In terms of a CRM structure, we would expect features
  in $F_i$, for example, would usually be associated with vertices at a greater
  depth than those in $F_r$.
   Empirical results tabulated for some statistical learners in \citep{saha2012kinds}
  suggest that relational features from the class $F_i$ were most useful
  for statistical learners.  If this empirical trend continues
  to hold, then we would expect the performance of CRMs to improve as
  depth increases (and features in $F_i$ are derived), and then to flatten
  or decrease (as features in $F_d \setminus F_i$ are derived).
  
 The development and application of CRMs is most closely related to the
 area of {\em self-explainable\/} deep neural networks \citep{alvarez2018towards,angelov2020towards,ras2022explainable}.
 The structure of the CRM enforces a meaning to each node in the network,
 and in turn, we have shown here how to extract one form of explanation from these
 meanings. A different kind of neural network, also with meanings associated
 nodes is described in \citep{sourek2018lifted}. Those networks are also explainable,
 although not in the manner described here. 
 In \citep{srinivasan2019logical}, 
 a symbolic proxy-explainer is constructed using ILP for a form of multi-layer perceptron
 (MLP) that uses as input values of relational
 features. The features there drawn from the class $F_d$, and
 the explanations are logical rules constructed by ILP using the
 feature-definitions provided to the MLP. There are at least
 two important differences to the explanations constructed there and the ones
 obtained with a CRM: (i) The rules constructed in \citep{srinivasan2019logical}
 effectively only perform the $\rho_2$ operation on relational
 features. This can result in a form of incompleteness: some
 features in $F_d$ cannot be represented by the rules, unless
 they are already included as input to the MLP; and
 (ii) The structuring of explanations in \citep{srinivasan2019logical} requires relevance
 information: here, the structuring is from usual functional
 (de)composition.

\section{Conclusion}
\label{sec:concl}

It has been long-understood in 
machine learning that the choice of representation can make
a significant difference to the performance and efficiency
of a machine learning technique. Representation is also clearly
of relevance when we are interested in constructing human-understandable
explanations for predictions made by the machine learning technique. A
form of machine learning that has paid particular attention to issues
of representation is the area of Inductive Logic Programming
(ILP). A form of representation that has been of special interest
in ILP is that of a {\em relational feature\/}. These are Boolean-valued
functions defined over objects, using definitions of relations
provided as prior- or background knowledge. The use of relational
features forms the basis of an extremely effective form of ILP called
{\em propositionalisation\/}. This obtains a Boolean-vector description
of objects in the data, using the definition of the relational features
and the background knowledge. Despite the obvious successes of
propositionalisation, surprising little is known, conceptually, about the 
space of relational features. In this paper, we have sought
to address this by examining relational features within a mode-language
$\Mu$, introduced in ILP within the setting of mode-directed
inverse entailment \citep{muggleton1995inverse}. Within a mode-language,
we identify the notion of $\Mu$-simple relational
features, and two operations $\rho_1$ and $\rho_2$ that
allows us to compose progressively
more complex relational feature in the mode-language. In
the first half of the paper, we show that $\rho_1$ and $\rho_2$
are sufficient to derive all relational features within a
mode-language $\Mu$. This generalises a previous result
due to \cite{mccreath1998lime}, which was restricted
to determinate definitions for predicates in the background knowledge, albeit starting from a different definition of
simple features to that work.

In the second half of the paper, we use the notion of $\Mu$-simple features
and the composition operators $\rho_1$ and $\rho_2$ to construct a kind
of deep neural network called a Compositional Relational
Machine, or CRM. A special aspect of CRMs is that we
are able to associate a relational feature with each node
in the network. The structure of the CRM allows us to identify further
how the feature at the node progressively decomposes
into simpler features, until an underlying set of $\Mu$-simple features
are reached. This corresponds well to the intuitive notion of
a structured explanation, that is composed of increasingly simpler
components. We show how this aspect of CRMs allows them to be used
as ``explanation machines'' for black-box models. Our results on
synthetic and real-data suggest that CRMs can reproduce target-predictions
with high fidelity; and the explanations constructed on synthetic data
suggest that CRM's explanatory structure usually also
contains an acceptable explanation.

We have not explored the
power of CRMs as ``white-box'' predictors in their own right, but
early results suggest that it may be possible to obtain
CRMs with good predictive accuracy. Although still significantly
lower than the state-of-the-art, we believe this can
change. We have also not
explored other forms of CRMs, both simpler and more elaborate. 
For example, the identification
 of $\Mu$-simple features and their subsequent compositions using the
 $\rho$-operators suggests an even simpler CRM structures than that
 used here. It is possible for example, simply to obtain all possible
 compositions to some depth, and use a Winnow-like parameter estimation \citep{littlestone1988learning} to obtain a self-explainable linear model. Equally, more complex CRMs
 are possible by incorporating weights on the $\Mu$-simple features
 (this could be implemented simply by changing the activation function
at the input nodes of the network). Taking this one step further, it
is possible to associate weights with all the relational features, which
will allow the use of the inference machinery of probabilistic logic
programs~\citep{de2019neuro}.
We think an investigation of these other kinds of compositional relational machines
would contribute positively to the growing body of work in human-intelligible
machine learning.

\begin{acknowledgements}
AS is a Visiting Professor at Macquarie University, Sydney
and a Visiting Professorial Fellow at UNSW, Sydney. He is also the Class of 1981
Chair Professor at BITS Pilani, Goa and a Research Associate at TCS Research.
AS and TD would like to thank Lovekesh Vig and Gautam Shroff at TCS Research for
interesting discussions on explainable neural networks; and Michael Bain at UNSW
for discussions on the use of ILP for constructing symbolic explanations.
\end{acknowledgements}

\section*{Declaration}

\begin{description}
\item[Funding] Not applicable.
\item[Conflicts of interest] Not applicable.
\item[Ethics approval] Not applicable.
\item[Consent to participate] Not applicable.
\item[Consent for publication] Not applicable.
\item[Data and code availability] All data and codes used in our research can be found at: \url{https://github.com/tirtharajdash/CRM}.
\item[Authors' contributions] 
AS and AB conceived and worked on the conceptual parts related to simple features and their composition, and the specification of CRMs. TD conceived and worked on the implementation of CRMs as gated neural networks. AS and TD conceived and worked on the application of CRMs to synthetic- and real-data. DS was involved in some parts of the implementation of CRMs.
\end{description}

\clearpage

\appendixpage

\begin{appendices}

\section{Logic Terminology}
\label{app:logic}

In this section we cover only terminology used in the paper,
and further confined largely to logic programming. 
For additional background and further terminology
see~\citep{lloyd2012foundations,chang2014symbolic,nilsson1991logic,muggleton1994inductive}. 
The summary below is adapted from \citep{srinivasan2019logical}.

A language of first order logic programs has a vocabulary of constants,
variables, function symbols, predicate symbols, logical
implication `$\leftarrow$', and punctuation symbols.
A function or predicate can have a number of arguments known as
\emph{terms}.
Terms are defined recursively.
A constant symbol (or simply ``constant'') is a term.
A variable symbol (or simply ``variable'') is a term.
If $f$ is an $m$-ary function symbol, and $t_1, \ldots, t_m$ are terms,
then the function $f(t_1, \ldots, t_m)$ is a term.
A term is said to be \emph{ground} if it contains no variables.

We use the convention used in logic programming when writing
clauses.  Thus, predicate, function and constant symbols are written as
a lower-case letter followed by a string of lower- or upper-case letters,
digits or underscores (`\_').
Variables are written similarly, except that the first letter must be upper-case.
This is different to the usual logical notation, where predicate-symbols start with
upper-case, and variables start with lower-case: however the logic programming
syntax is useful for the implementation of CRMs.
Usually, predicate symbols will be denoted by symbols like $p,q,r$, etc.,
and symbols like $X,Y,Z$ to denote variables.
If $p$ is an  $n$-ary predicate symbol, and $t_1, \ldots, t_n$ are terms,
then the predicate $p(t_1, \ldots, t_n)$ is an atom.
Predicates with the same predicate symbol but different arities are
distinguished by the notation $p/n$ where $p$ is a predicate of arity $n$.

A literal is either an atom or the negation of an atom.
If a literal is an atom it is referred to as a positive literal, otherwise
it is a negative literal.
A clause is a disjunction of the form
$A_1 \vee \ldots \vee A_i \vee \neg A_{i+1} \vee \ldots \vee \neg A_k$,
where each $A_j$ is an atom.
Alternatively, such a clause may be represented as an implication (or ``rule'')
$A_1, \ldots,  A_i \leftarrow A_{i+1}, \ldots,  A_k$.
A definite clause $A_1 \leftarrow A_2, \ldots,  A_k$ has exactly one
positive literal, called the \emph{head} of the clause, with the
literals $ A_2, \ldots,  A_k$ known as the \emph{body} of the clause.
A definite clause with a single literal is called a \emph{unit} clause,
and a clause with at most one positive literal is called a Horn clause.
A set of Horn clauses is referred to as a logic program.
It is often useful to represent a clause as a set of literals.

A \emph{substitution} $\theta$ is a finite set $\{v_1/t_1, \ldots, v_n/t_n\}$
mapping a set of $n$ distinct variables $v_i$, $1 \leq i \leq n$, to terms
$t_j$, $1 \leq j \leq n$ such that no term is identical to any of the variables.
A substitution containing only ground terms is a \emph{ground} substitution.
For substitution $\theta$ and clause $C$ the expression $C\theta$ denotes
the clause where every occurrence $C$ of a variable from $\theta$ is
replaced by the corresponding term from $\theta$.
If $\theta$ is a ground substitution then $C\theta$ is called a ground
clause.
Since a clause is a set, for two clauses $C$, $D$, the set inclusion
$C\theta \subseteq D$ is a partial order called \emph{subsumption},
usually written $C$ $\theta$-subsumes $D$ and denoted by $C \preceq D$.
For a set of clauses $S$ and the subsumption ordering $\preceq$, we have
that for every pair of clauses $C, D \in S$, there is a least upper bound
and greatest lower bound, called, respectively, the least general
generalisation (lgg) and most general unifier (mgu) of $C$ and $D$, which
are unique up to variable renaming.
The subsumption partial ordering on clauses enables the definition of a lattice,
called the \emph{subsumption lattice}.

We assume the logic contains axioms allowing for inference using the equality predicate $=/2$.
This includes axioms for reflexivity ($\forall x (x = x)$), and substitution
($\forall x ~(\phi(x) \wedge (x = y) \longrightarrow \phi(y))$).

\section{Mode Language}
\label{app:modes}

We borrow some of the following definitions from \citep{dash2022inclusion}. The definition of $\lambda\mu$ sequence is simplified as we are dealing only with feature clauses here and all other definitions are same as in \citep{dash2022inclusion}.

\begin{definition}[Term Place-Numbering]
Let $\pi = \langle i_1,\ldots, i_k \rangle$
be a sequence of natural numbers.
We say that a term $\tau$ is in
place-number $\pi$ of a literal $\lambda$ iff: 
(1) $\pi \neq \langle\rangle$; and
(2) $\tau$ is the term at place-number $\langle i_2, \ldots, i_k \rangle$ in
the term at the $i_1^{\mathrm {th}}$ argument of $\lambda$.
$\tau$ is at a place-number $\pi$ in term $\tau'$: 
(1) if $\pi = \langle\rangle$ then
$\tau=\tau'$; and
(2) if $\pi = \langle i_1,\ldots,i_k\rangle$
then $\tau'$ is a term of the form $f(t_1,\ldots,t_m)$, 
$i_1 \leq m$ and $\tau$ is in place-number
$\langle i_2,\ldots,i_k\rangle$ in $t_{i_1}$. 
\label{def:placenum}
\end{definition}

\begin{definition}[Type-Names and Type-Definitions]
\label{def:types}
Let $\Gamma$ be a set of types and
$\Tau$ be a set of ground-terms. For 
$\gamma \in \Gamma$ we define a set of
ground-terms $T_\gamma$ =
$\{\tau_1,\tau_2,\ldots\}$, where $\tau_i \in \Tau$.
We will say a ground-term
$\tau_i$ is of type $\gamma$ if
$\tau_i \in T_\gamma$, and
denote by $T_\Gamma$ the set
$\{T_\gamma: \gamma \in \Gamma\}$.
$T_\Gamma$ will be called a set of type-definitions.
\end{definition}
\noindent

\begin{definition}[Mode-Declaration]
\label{def:modedecs}

\begin{enumerate}[(a)]
    \item Let $\Gamma$ be a set of type names. A mode-term  is defined recursively
    as one of:
    (i) $+\gamma$, $-\gamma$ or
    $\#\gamma$ for some $\gamma \in \Gamma$; or
    (ii) $\phi({mt}_1',{mt}_2',\ldots,{mt}_j')$, where
        $\phi$ is a function symbol of arity $j$,
        and the ${mt}_k'$s are mode-terms.
        
\item A mode-declaration $\mu$ is of the form
    $modeh(\lambda')$ or $modeb(\lambda')$.
    Here $\lambda'$ is a ground-literal of
    the form $p({mt}_1,{mt}_2,\ldots,{mt}_n)$ where
    $p$ is a predicate name with arity $n$, and
    the ${mt}_i$ are mode-terms. We will
    say $\mu$ is a $modeh$-declaration
    (resp. $modeb$-declaration)
    for the predicate-symbol $p/n$. In general
    there can be several $modeh$ or $modeb$-declarations for
    a predicate-symbol $p/n$.
    We will use $ModeLit(\mu)$ to denote $\lambda'$.
\item $\mu$ is said to be a mode-declaration for a literal
    $\lambda$ iff $\lambda$ and $ModeLit(\mu)$
    have the same predicate
    symbol and arity.
\item Let $\tau$ be the term at place-number $\pi$ in $\mu$,
    We define
    \[ModeType(\mu,\pi) = \left\{\begin{matrix}
    (+, \gamma) & ~\mathrm{if}~ \tau= +\gamma \\
    (-, \gamma) & ~\mathrm{if}~ \tau= -\gamma \\
    (\#, \gamma) & ~\mathrm{if}~ \tau= \#\gamma \\
    unknown      & ~\mathrm{otherwise}
    \end{matrix}\right.\]
\item If $\mu$ is a mode-declaration for literal $\lambda$,
    $ModeType(\mu,\pi)$ = $(+,\gamma)$
    for some place-number $\pi$, 
    $\tau$ is the term at place $\pi$ in
    $\lambda$, 
    then we will say $\tau$ is an input-term
    of type $\gamma$ in $\lambda$ given $\mu$
    (or simply $\tau$ is an input-term of type
    $\gamma$). Similarly we define
    output-terms and constant-terms.
\end{enumerate}

We will also say that mode $\mu$ contains an input argument of type $\gamma$ if there exists
    some term-place $\pi$ of $\mu$ s.t. $ModeType(\mu,\pi) = (+,\gamma)$. Similarly for output arguments.
    
\end{definition}

\begin{definition}[$\lambda\mu$-Sequence]
\label{def:lmpair}
Assume a set of type-definitions
$T_\Gamma$, modes $\Mu$.
Let $\langle C \rangle = \langle l_1,\neg l_{2},l_3\ldots,\neg l_k \rangle$ be an
ordered clause.
Then 
$\langle (\lambda_1,\mu_1),(\lambda_2,\mu_2),\dots, (\lambda_k,\mu_k) \rangle$
is said to be a $\lambda\mu$-sequence for $\langle C \rangle$ iff it
satisfies the following constraints:

\begin{description}
\item[Match.]
    (i) $\lambda_i = l_i$;  
    (ii) For $j=1,\dots,k$, $\mu_j$ is a mode-declaration for $\lambda_j$ s.t.
        $\mu_j = modeh(\cdot)$ ($j=1$)  and $\mu_j = modeb(\cdot)$ ($j > 1$).
\item[Terms.] 
(i) If $\tau$ is an input- or output-term in $\lambda_j$ given $\mu_j$,
                then $\tau$ is a variable in $\lambda_j$; 
(ii) Otherwise if $\tau$ is a constant-term in
                $\lambda_j$ given $\mu_j$ then $\tau$ is a ground term.
\item[Types.]
    (i) If there is a variable $v$ in both $\lambda_i$, $\lambda_j$ then
        the type of $v$ in $\lambda_i$ given $\mu_i$ is the same as the type of $v$ in $\lambda_j$ given $\mu_j$;
    (ii) If $\tau$ is a constant-term in $\lambda_i$ and the type of $\tau$ in $\lambda_i$ given
        $\mu_i$ is $\gamma$, then $\tau \in T_\gamma$.
\item[Ordering.]
    (i) If $\tau$ is an input-term in $\lambda_j$ given $\mu_j$ and $j > 1$ then there is an input-term $\tau$ in $\lambda_1$     given     $\mu_1$; or there is an output-term $\tau$ in $\lambda_i$ ($m < i < j$) given $\mu_i$.
    (ii) If $\tau$ is an output-term in 
        $\lambda_1$ given $\mu_1$, then $\tau$ is an output-term of some $\lambda_i$ ($1 < i \leq k$)
        given $\mu_i$.
\end{description}

\end{definition}

\begin{definition}[Mode-Language]
\label{def:modelang}
Assume a set of type-definitions
$\Tau$ and  modes  $\Mu$. The mode-language ${\cal L}_{\Mu,\Tau}$  
is $\{\langle C \rangle:$ 
either $C=\emptyset$  or  
there exists a $\lambda\mu$-sequence for $\langle C\rangle \}$.
\end{definition}

\noindent



\section{Proof of the Derivation Lemma}
\label{app:deriv}

\begin{lemmap}{\ref{lemma:deriv}}[Derivation Lemma]
Given $\Mu, \Mu'$ and $\Omega = \{\rho_1,\rho_2\}$ as before. Let
$\langle C \rangle$ be an ordered clause in ${\cal L}_{\Mu,\Tau}$, with
head $p(X)$.
Let $S$ be a set of ordered $\Mu$-simple clauses in ${\cal L}_{\Mu,\Tau}$, with
heads $p(X)$ and all other variables of clauses in $S$ standardised apart from each other and from $C$. If there
exists a substitution $\theta$ s.t.
$Basis(\langle C \rangle)\subseteq S\theta $ then
there exists an ordered clause $\langle C' \rangle$
in ${\cal L}_{\Mu',\Tau}$  such that $\langle C' \rangle$ is equivalent to $\langle C \rangle$ and derivable from $S$ using $\Omega$.
\end{lemmap}

\begin{proof}
We prove this in 3 parts:

\begin{enumerate}
    \item We first show: if $Basis(\langle C \rangle)\subseteq S\theta $,
        then $\theta$ is a type-consistent substitution for clauses in $S$. 
        \begin{itemize}
            \item[] Let $Basis(\langle C \rangle)$ =
            $\{\langle C_1 \rangle, \ldots, \langle C_k \rangle\}$
            and $S' = \{\langle S_1 \rangle, \ldots, \langle S_k \rangle\} \subseteq S$.
            Without loss of generality, let $\langle S_i \rangle \theta = \langle C_i \rangle$.
            Let $\theta = \{Y_1/t_1, Y_2/t_2,\ldots,Y_n/t_n\}$
                where the $Y_i$ are variables in $S$ and the $t_i$ are
                terms in $Basis(\langle C \rangle)$. In general,
                the $t_i$'s are variables, constants, or functional terms.
                Let $t_j$ be a constant. Suppose clause $S_i$ has
                the variable $Y_j$ in some location in $S_i$ and clause $C_i$ has the
                constant $t_j$ in the corresponding location. Since both
                $S_i$ and $C_i$ are in ${\cal L}_{\Mu,\Tau}$, this is only
                possible if there are multiple mode-declarations for the
                corresponding literals in $S_i$ and $C_i$.
                But constraint MC2 (in \autoref{sec:featclauseinmode}) ensures that there is exactly one mode-declaration
                for any literal. Therefore $t_i$ cannot be a constant. Reasoning
                similarly, $t_i$ cannot be a functional term. Therefore $t_i$
                has to be a variable in $C_i$.
                Since $Y_j$ and $t_j$ are in the same locations for
                some literal $l$ in $S_i$ and $C_i$, and there is
                only one mode-declaration for $l$ in $\Mu$, it follows
                that the types of $Y_j$ and $t_j$ must be the same.
        \end{itemize}
        
    \item If there exists a type-consistent substitution $\theta$ for
        clauses in $S$, then there is an ordered clause $\langle C'' \rangle$
        in ${Closure}_{\{\rho_2\}}(S)$
        s.t. $C'' \theta \equiv C$.
        \begin{itemize}
            \item[]  Let us fix an ordering among the simple clauses in $S$:
                $S_1,\ldots,S_k$, where $S_i=p(X) \leftarrow 
                Body_i(X,\mathbf{Y_i})$. Let us define a sequence of ordered clauses 
                $\langle C_1\rangle$, $\langle C_2 \rangle$, $\dots$, $\langle 
                C_{k}\rangle$ where $\langle C_1 \rangle$ is $\langle S_1 \rangle$, and
                $\langle C_i \rangle$ is in $\rho_2(\langle C_{i-1}\rangle,
                \langle S_{i} \rangle)$ for $i=2$ to $k$. It 
                is easy to see that the above sequence is a derivation 
                sequence of $\langle C_{k} \rangle$ from $\{\langle S_1 
                \rangle, \langle S_2 \rangle, \dots, \langle S_k \rangle\}$
                using $\{\rho_2\}$. That is,
                $\langle C_{k} \rangle$ is in ${Closure}_{\{\rho_2\}}(S)$.
                $\langle C_{k} \rangle$ is of the form $p(X) \leftarrow 
                Body_1(X,\mathbf{Y_1}), Body_2(X,\mathbf{Y_2}),$ $\dots, 
                Body_k(X,\mathbf{Y_k})$.
                That is
                $C_k = \bigcup_{i=1}^k S_i$.
                Since $S\theta = Basis(\langle C \rangle)$, then
                $C = \bigcup_{i=1}^k S_i\theta$.
                Let $\langle C'' \rangle$ = $\langle C_k \rangle$
                That is, $C'' \theta = C$. Hence
                $C'' \theta \equiv C$.
                Also, since $\theta$ is a type-consistent substitution for clauses
                    $S_i \in S$,  and $C'' = \bigcup_i S_i$, $\theta$
                    is a type-consistent substitution for $C''$.
            \end{itemize}
        \item   From above $\langle C'' \rangle $ and $\theta$,
            we construct an ordered clause
            $\langle C' \rangle$ in ${\cal L}_{\Mu',\Tau}$ s.t.
            $\langle C' \rangle$ is in $Closure_{\{\rho_1\}}(\{\langle C'' \rangle\})$ and
            $C'' \theta \equiv C'$.
            \begin{itemize}
                \item[]             
                Let $\langle C'' \rangle$ $= (p(X) \leftarrow Body(X,{\mathbf Y}))$,
                and $\theta = \{Y_1/Y_1', 
                \ldots, Y_l/Y_l'\}$. It is easy to see that $\theta$ 
                induces an equivalence relation on the variables in 
                $C''$. Variables $Y_i$ and $Y_j$ are in the same 
                equivalence class if both variables map to the 
                same variable $Y_m'$. For each equivalence class 
                $[U]$, we earmark one element in the class as 
                representative of the class. This element is denoted by $rep([U])$.
                Let $C''$ be $p(X) \leftarrow Body(X,{\mathbf Y})$.
                Let us fix an order among the variables occurring in 
                $\langle C'' \rangle$: $Y_1,Y_2,\dots,Y_l$. 
                Consider the following sequence of ordered clauses $\langle 
                C_0\rangle, \langle C_1 \rangle, \dots, \langle C_l 
                \rangle$ where $\langle C_0\rangle =\langle 
                C''\rangle$, and
                $\langle C_i \rangle = 
                p(X) \leftarrow Body(X,\mathbf{Y}), Y_1=rep([Y_1])$, $Y_2=rep([Y_2]), \dots, Y_l=rep([Y_l])$. 
                The above sequence is a derivation of $\langle C_l 
                \rangle$ from $\langle C'' \rangle$ using only $\rho_1$.
                That is
                $\langle C_l \rangle$ is in ${Closure}_{\{\rho_1\}}(\{\langle C'' \rangle \})$.
                Let $\langle C' \rangle$ be $\langle C_l 
                \rangle$ and it is of the form
                $p(X) \leftarrow Body(X,{\mathbf Y}), Y_1 = rep([Y_1]),\ldots, Y_l = rep([Y_l])$.
                It is easy to see that $\langle C' \rangle \in {\cal L}_{\Mu',\Tau}$.
                We now show $C''\theta  \equiv C'$. Let
                $\theta = \theta'' \circ \theta'$,
                where $\theta'' = \{Y_1/rep([Y_1]), \ldots, Y_l/rep([Y_l]\}$
                and $\theta' = \{rep([Y_1])/Y_1', \ldots,$ $
                rep([Y_l])/Y_l'\}$.\footnote{Correctly,
                $\theta'$ and $\theta''$ have to be functions.
                It is evident that $\theta''$ is a function, since
                each variable maps to the representative of its equivalence
                class. We argue informally that $\theta'$ is a function as
                follows. If $Y_i$ and $Y_j$ are 
                in the same equivalence class,  then $rep([Y_i])$ and 
                $rep([Y_j])$  are the same variables. But according to our 
                $\theta'$, they are mapped to $Y_i'$ and $Y_j'$. One might
                think $\theta'$ is not a function. But the 
                variables in the same equivalence class are mapped to same 
                variable. So $Y_i'$ is the same as $Y_j'$ and hence $
                \theta'$ is a function.}
                The variables $rep([Y_i])$ are in $S$ and the variables 
                $Y_i'$ are in $C$. Since the variables in $S$ are 
                standardised apart from the variables in $C$, $\theta'$ is
                a renaming substitution.      
                Therefore
                $C'' \theta' \equiv C''$.
                Therefore $C'' \theta$ $\equiv$  $C''\theta''$.
                By the substitution axiom in the equality logic
                $C''\theta''$ $\equiv$ $C'$. Therefore $C''\theta \equiv C'$.
        \end{itemize}
\end{enumerate}

 \noindent
From (1)--(3) above, the result follows. \qed
\end{proof}

\section{Properties of Derivations using \texorpdfstring{$\rho_1,\rho_2$}{rho}}
\label{app:linderiv}

\begin{lemma}[Reordering Lemma]
Given $M,M'$ and a set
of ordered clauses $\Phi$. If  an ordered clause $\langle C \rangle$ is derivable from $\Phi$ using $\{\rho_1,\rho_2\}$
then:
(a) there is a derivation for $\langle C_j \rangle$ from $\Phi$ using $\{\rho_
2\}$;
(b) there is a derivation for $\langle C_k \rangle$ from $\{\langle C_j \rangle\}$ using $\{\rho_1\}$; and
(c) $\langle C \rangle$ is equivalent to $\langle C_k \rangle$.
\end{lemma}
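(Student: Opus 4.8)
The plan is to prove the lemma by induction on the length $n$ of a derivation $\langle C_1 \rangle, \ldots, \langle C_n \rangle$ of $\langle C \rangle = \langle C_n \rangle$ from $\Phi$ using $\{\rho_1,\rho_2\}$, carrying along a decomposition of each intermediate clause into a ``relational part'' and an ``equality part''. Concretely, I would show by induction that for every $i$ the body of $C_i$ can be written as a union $B_i \cup Q_i$, where $B_i$ is a union of (instances of) bodies of clauses in $\Phi$ and $Q_i$ is a set of equality literals of the form $Y = Y'$, and moreover (i) the clause $p(X) \leftarrow B_i$ is derivable from $\Phi$ using only $\rho_2$, and (ii) the clause $p(X) \leftarrow B_i, Q_i$ is derivable from $p(X) \leftarrow B_i$ using only $\rho_1$. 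Taking $i = n$ then yields $\langle C_j \rangle := (p(X) \leftarrow B_n)$ for part (a) and $\langle C_k \rangle := (p(X) \leftarrow B_n, Q_n)$ for part (b); since $B_n \cup Q_n$ is exactly the body of $C$, the clauses $C_k$ and $C$ carry identical literal sets, which gives part (c).

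The inductive step splits according to how $C_i$ arises. If $C_i$ is an instance of a clause in $\Phi$, take $B_i$ to be its body and $Q_i = \emptyset$; both auxiliary derivations are trivial. If $C_i \in \rho_1(\langle C_m \rangle)$ for some $m < i$, then $C_i$ adjoins a single equality $Y_1 = Y_2$ to $C_m$; I keep $B_i = B_m$, set $Q_i = Q_m \cup \{Y_1 = Y_2\}$, reuse the $\rho_2$-derivation of $p(X) \leftarrow B_m$ unchanged, and extend the $\rho_1$-derivation of $p(X) \leftarrow B_m, Q_m$ by one further $\rho_1$ step. If $C_i \in \rho_2(\langle C_m \rangle, \langle C_{m'} \rangle)$ with $m, m' < i$, then the body of $C_i$ is the union of the bodies of $C_m$ and $C_{m'}$; I set $B_i = B_m \cup B_{m'}$ and $Q_i = Q_m \cup Q_{m'}$, concatenate the two $\rho_2$-derivations supplied by the induction hypothesis and append one $\rho_2$ step to obtain $p(X) \leftarrow B_m, B_{m'}$, and then re-add all the equalities of $Q_m$ and $Q_{m'}$ by successive $\rho_1$ steps.

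The one genuinely load-bearing observation, and the step I expect to require the most care, is why a $\rho_1$ step that was legal \emph{earlier} in the original derivation remains legal once it is \emph{postponed} to act on the larger body $B_i$. This rests on the fact that in $\Mu'$ the equality predicate has mode $+\gamma = +\gamma$, so both of its arguments are input terms and an equality literal contributes \emph{no} output variables. Consequently the set of output variables of a body is determined entirely by its relational part, and is only enlarged (never shrunk) when further relational literals are conjoined by $\rho_2$. Hence any pair $Y_1, Y_2$ that were output variables of the same type justifying a $\rho_1$ application on some $C_m$ are still output variables of the same type in $B_i \supseteq B_m$, so the deferred $\rho_1$ step is well-defined; the same argument shows every equality of $Q_m$ (resp.\ $Q_{m'}$) may legitimately be re-added over $B_i$.

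Finally I would discharge the routine bookkeeping: checking that variables remain standardised apart when the two $\rho_2$-subderivations are concatenated (so that the appended $\rho_2$ step genuinely produces $B_m \cup B_{m'}$ rather than an accidental unification), and confirming that $\langle C_k \rangle = (p(X) \leftarrow B_n, Q_n)$ lies in ${\cal L}_{\Mu',\Tau}$, which follows because each constituent $\rho_2$ and $\rho_1$ operation preserves membership in the mode-language. Since none of these rearrangements alters the underlying set of literals, $Set(\langle C_k \rangle) = Set(\langle C \rangle)$ and therefore $\langle C \rangle \equiv \langle C_k \rangle$, completing (c).
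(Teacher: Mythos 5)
Your proposal is correct and takes essentially the same route as the paper: split the body of $C$ into the relational literals drawn from bodies of clauses in $\Phi$ and the equalities introduced by $\rho_1$, derive the former using $\rho_2$ alone, and then re-add the equalities by repeated $\rho_1$. Your induction on the derivation length, and in particular the observation that equality literals contribute no output variables so postponed $\rho_1$ steps remain legal over the enlarged body, supplies the details that the paper's proof dismisses as ``easy to see''.
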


\begin{proof}
Let us assume there is a derivation $\tau$ for $\langle C \rangle$ from $\Phi$ using $\{\rho_1,\rho_2\}$. There exists an ordered clause $\langle C_k \rangle$ equivalent to $\langle C \rangle = p(X) \leftarrow l_1,l_2,\dots, l_j,l_{j+1},l_{j+2},\dots,l_k$ such that \begin{itemize}
\item the sequence of literals $l_1,\dots, l_j$ can be split into $Body(X,Y_1),\dots, Body(X,Y_m)$ such that $p(X) \leftarrow Body(X,Y_i) \in \Phi$ for each $i=1,\dots,m$, and
\item $l_{j+1},\dots, l_k$ are equality predicates introduced using $\rho_1$ operator in the derivation $\tau$. Let $\langle C_j \rangle$ be $p(X) \leftarrow 
 Body(X,Y_1),\dots, Body(X,Y_m)$. 
\end{itemize}
 It is easy to see that there is derivation $\tau_1$ for $\langle C_j \rangle$  from $\Phi$ using $\rho_2$ operator. Now we can apply $\rho_1$ operator on $\langle C_j \rangle$ repeatedly to derive $\langle C_k \rangle$.  \qed
\end{proof}

\begin{lemma}
Given $M,M'$ and a set
of ordered clauses $\Phi$. If an ordered clause $\langle C \rangle$ is derivable from $\Phi$ using $\{\rho_2\}$
then it is linearly derivable from $\Phi$ using $\{\rho_2\}$.
\end{lemma}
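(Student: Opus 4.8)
The plan is to exploit the fact that $\rho_2$ acts purely by concatenating clause bodies, so that the ordered clause produced by any $\{\rho_2\}$-derivation is completely determined by the left-to-right sequence of $\Phi$-instances sitting at the leaves of its derivation tree, independently of how those applications are nested. A linear derivation is precisely one that builds up such a concatenation from left to right, always combining the running clause with a fresh clause from $\Phi$, so the result reduces to showing that an arbitrary nesting can be ``flattened'' into the left-associated one.

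First I would make precise the key invariant. By induction on the number of $\rho_2$-applications used to derive $\langle C \rangle$, I would establish the claim: there is a finite sequence of instances $\langle D_1 \rangle, \ldots, \langle D_m \rangle$ of elements of $\Phi$, with all variables other than $X$ standardised apart, such that $C = (p(X) \leftarrow Body(D_1), \ldots, Body(D_m))$ as an \emph{ordered} clause, i.e.\ with the body literals appearing in exactly this order. The base case is $m = 1$, where $\langle C \rangle$ is itself an instance of an element of $\Phi$. For the inductive step, $\langle C \rangle \in \rho_2(\langle A \rangle, \langle B \rangle)$ with $\langle A \rangle, \langle B \rangle$ each derivable using strictly fewer applications; by the induction hypothesis $A$ and $B$ are the ordered concatenations of leaf-sequences $\langle D_1 \rangle, \ldots, \langle D_r \rangle$ and $\langle D_{r+1} \rangle, \ldots, \langle D_m \rangle$ respectively, and since $\rho_2$ appends $Body(B)$ after $Body(A)$, the clause $\rho_2(A,B)$ is exactly the concatenation of $\langle D_1 \rangle, \ldots, \langle D_m \rangle$.

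With the claim in hand I would construct the linear derivation explicitly. Set $\langle C_1 \rangle = \langle D_1 \rangle$ (an element of $\Phi$, satisfying clause (a) of the derivation definition), and for $2 \leq i \leq m$ take $\langle C_i \rangle \in \rho_2(\langle C_{i-1} \rangle, \langle D_i \rangle)$ with $\langle D_i \rangle \in \Phi$. Each step combines the immediately preceding clause with a clause drawn directly from $\Phi$, which is exactly the format required of a linear $\{\rho_2\}$-derivation. An easy induction shows $C_i = (p(X) \leftarrow Body(D_1), \ldots, Body(D_i))$, whence $C_m = C$, giving a linear derivation of $\langle C \rangle$ itself.

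The only delicate points I expect are bookkeeping rather than conceptual. I must check that the standardising-apart condition is preserved when the leaf-instances of the two sub-derivations are concatenated; this holds because the leaves of $\langle A \rangle$ and $\langle B \rangle$ are already standardised apart in the original derivation and the linear derivation reuses the very same instances. I would also state explicitly that, because $\rho_2$ concatenates bodies in a fixed order, the resulting ordered clause depends only on the leaf order and not on the nesting. This ``associativity'' of $\rho_2$ is the crux of the argument, and it is what lets the linearly-derived clause be \emph{identical} to $\langle C \rangle$ rather than merely equivalent to it.
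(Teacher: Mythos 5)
Your proof is correct and rests on the same idea as the paper's: an induction over the derivation exploiting the fact that $\rho_2$ merely concatenates bodies, so any nesting can be flattened into a left-associated one. The only organizational difference is that you make the flat ``leaf sequence'' normal form explicit before rebuilding linearly, whereas the paper's inductive step directly merges the two linear derivations of the operands of each $\rho_2$ application by appending the $\Phi$-clauses of the second one at a time; the content is the same.
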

\begin{proof}
Let us assume that $\langle C \rangle$ is derivable from $\Phi$ using $\{\rho_2\}$. Hence there exists 
a derivation sequence $\xi$ of ordered clauses $\langle C_1 \rangle, \langle C_2 \rangle, \dots, \langle C_n \rangle$ from $\Phi$ using $\{\rho_2\}$ and $\langle C_n \rangle =\langle C \rangle$. Now we use mathematical induction to prove that for each $1\leq i \leq n$, there exists a linear derivation of $\langle C_i \rangle$ from $\Phi$ using $\{\rho_2\}$. 

\begin{description}
    \item[Base step:] Case $n=1$ is easy as  $\langle C_1 \rangle$ should be in $\Phi$ and hence there is a linear derivation for $\langle C_1 \rangle$ from $\Phi$ using $\{\rho_2\}$. 
    \item[Induction step:] We assume there
    exists a linear derivation for $\langle C_i \rangle$ from $\Phi$ using $\{\rho_2\}$ for each $i<j$ (induction hypothesis) and prove that there  exists a linear derivation for $\langle C_j \rangle$. The clause $\langle C_j\rangle$ occurs in the derivation sequence $\xi$. There are two cases to be considered: it is in $\Phi$ or it is in $\rho_2(\langle C_l\rangle, \langle C_m\rangle)$ where $l,m <j$. If $\langle C_j \rangle$ is in $\Phi$, there exists a linear derivation sequence for $\langle C_j \rangle$ from $\Phi$ (just the one step derivation containing itself). Now we prove the claim for the second case. Since  $l,m < j$, by induction hypothesis, there 
    exist linear derivation sequences $\xi_l$ for $\langle C_l 
    \rangle$ and $\xi_m$ for $\langle C_m \rangle$ from $\Phi$ using $\{\rho_2\}$. Let $\langle C_1'\rangle, \langle C_2' \rangle,\dots, \langle C_k'\rangle $ be the clauses in $\Phi$ and occurring in $\xi_m$. Now
    consider the following sequence $\xi_j$ containing $\xi_l$, $\langle C_1'\rangle$, $\langle C_1'' \rangle$,
    $\langle C_2'\rangle$, $\langle C_2''\rangle$,\dots, $\langle C_k' \rangle$,$\langle C_k'' \rangle$ where 
    $\langle C_1'' \rangle \in \rho_2(\langle C_l \rangle, 
    \langle C_1' \rangle)$ and
    $\langle C_a'' \rangle \in \rho_2(\langle C_{a-1}'' \rangle, 
    \langle C_a' \rangle)$ for $a=2$ to $k$. It is easy to see that $\xi_j$ is a
    linear derivation sequence for $\langle C_j \rangle$ from $\Phi$ using $\rho_2$. Hence there
    exists a linear derivation sequence for $\langle
    C_n \rangle$ from $\Phi$ using $\{\rho_2\}$. 
\end{description}
\qed
\end{proof}

Now the linear derivation lemma (re-stated below) can be easily proved by combining the above two lemmas. 

\begin{lemmap}{\ref{lemma:linderiv}}[Linear Derivation Lemma]
Given $M,M'$ and a set
of ordered clauses $\Phi$. If an ordered clause $\langle C \rangle$ is derivable from $\Phi$ using $\{\rho_1,\rho_2\}$
then there exists an equivalent ordered clause $\langle C' \rangle$ and it is linearly derivable from $\Phi$ using $\{\rho_1,\rho_2\}$.
\end{lemmap}

\section{Relationship of \texorpdfstring{$\Mu$}{M}-Simple Feature-Clauses to Known Feature Classes}
\label{app:fclasses}

We note the following about the sets of feature-clauses
in \autoref{fig:fclasses}(b) and the set $F_{\Mu}$ of $\Mu$-simple feature clauses:

\begin{description}

    \item[The set $F_d$.] This is identical to the set of ordered feature-clauses in the mode-language $\Mu$;
    
    \item[The set $F_i$.] This set is defined based on a given mode language $M$ (but without constraints 
    MC--MC3). An ordered feature clause  $\langle C \rangle$ is in $F_i$ iff the number of connected 
    components after removing the source vertex (corresponding to the head literal) from the clause dependency graph 
    of $\langle C \rangle$ is exactly one. Since the clause dependency graph of a clause in $F_\Mu$ has
    exactly one sink literal and any vertex in that graph has a path to that sink vertex (\autoref{rem:singlesource}).  So the number of connected components after removing the source vertex is one. Hence $F_\Mu \subseteq F_i$. It is easy to see that $F_i \not \subseteq F_\Mu$ (see the counterexample given for $F_s \subseteq F_\Mu$). 
    
    \item[The set $F_s$.] This set is defined based on the 
        class of {\em simple\/} clauses identified by  \cite{eric:thesis}
        who proposed {\em simple clauses\/}. We note:
        \begin{enumerate}
            \item Feature-clause definitions in $F_s$ do not refer to a mode-language. The
                clause dependency-graph is constructed using a procedure described in \citep{eric:thesis},
                and is based on the re-occurrence of variables (without any reference to input
                or output variables or types). $\Mu$-simple feature-clauses require
                a mode-language, with the constraints MC1--MC3;
            \item $F_{\Mu} \not\subseteq F_s$.
                For example assume a mode language $p(+int), q(+int,-int), r(+int)$ and the
                feature-clause $p(X) \leftarrow q(X,X),r(X)$. The clause dependency-graph of this clause has only one sink vertex and so it is in $F_{\Mu}$. But this is not in $F_{s}$
                since the directed graph associated with this clause has two sink literals. 
        \item $F_{s} \not\subseteq F_{\Mu}$. 
            For example assume a mode language $p(+int)$, $q(+int,-int)$, $r(+int,-int)$ and the
            feature-clause $p(X) \leftarrow q(X,Y),r(X,Y)$. This is in $F_s$,
            but is not in $F_{\Mu}$. The cause dependency-graph described in
            this paper doesn't have an edge between the vertices for $q$ and $r$ resulting in two
            sink vertices. But the dependency graph constructed in \citep{eric:thesis} will have an edge between the vertices for $q$ and $r$ because of the shared variable $y$.
    \end{enumerate} 
    \item[The set $F_r$.] The set $F_r$ consist of feature-clauses designed for
        {\em subgroup discovery} in relational data \citep{lavravc2002rsd}. Then:
            \begin{enumerate}
                \item Feature-clauses in $F_r$ do require a mode-language, and we can
                    construct a clause dependency-graph as described
                    here. The clause dependency-graph for feature-clauses in $F_r$
                    have exactly one component, and all new existential 
                    variables introduced by a source literal appear in source or sink literals.                 
                \item $F_{\Mu} \not\subseteq F_r$. For example assume a mode language 
                    $p(+int), q(+int,-int)$ and the clause $p(X) \leftarrow q(X,Y)$. 
                    This is a $\Mu$-simple clause but not in $F_r$ as the existential variable 
                    introduced at the source literal $q$ but it is not appearing later.
                \item $F_{r} \not \subseteq F_{\Mu}$. For example assume a mode language 
                    $p(+train)$, $has\_car(+train, -car)$, $short(+car)$, $closed(+car)$ and the 
                    clause $p(X) \leftarrow has\_car(X,Y), short(Y),closed(Y)$. This is not 
                    $\Mu$-simple clause there are two sink literals in it but it is in $F_r$. 
            \end{enumerate}
\end{description}

\section{Additional Details Relevant to the Experiments}
\label{app:expt}

\subsection{Examples of Mode Declarations and Simple Feature-Clauses} 

\autoref{fig:trainmodes}--\autoref{fig:ncimodes} show examples of mode declarations
used for the experiments. Also shown are examples of $\Mu$-simple feature-clauses
constructed automatically from the mode-declarations.

\begin{figure}[!htb]
    \centering
    \begin{tabular}{ll} 
    \hline
    \multicolumn{1}{c}{Modes} & \multicolumn{1}{c}{$\Mu$-simple feature clauses}\\ 
    \hline
    {\tt modeh(p(+tr))} & $p(A) \leftarrow has\_car(A,B)$ \\ 
    {\tt modeb(short(+car))} & $p(A) \leftarrow has\_car(A,B),short(B)$ \\ 
    {\tt modeb(closed(+car))} & $p(A) \leftarrow has\_car(A,B),closed(B)$ \\ 
    {\tt modeb(long(+car))} & $p(A) \leftarrow has\_car(A,B),long(B)$ \\ 
    {\tt modeb(open\_car(+car))} & $p(A) \leftarrow has\_car(A,B),open\_car(B)$ \\ 
    {\tt modeb(double(+car))} & $p(A) \leftarrow has\_car(A,B),double(B)$ \\ 
    {\tt modeb(jagged(+car))} & $p(A) \leftarrow has\_car(A,B),jagged(B)$ \\ 
    {\tt modeb(shape(+car,\#shape))} & $p(A) \leftarrow has\_car(A,B),shape(B,u\_shaped)$ \\ 
    {\tt modeb(load(+car,\#shape,\#int))} & $p(A) \leftarrow has\_car(A,B),shape(B,rectangle)$ \\ 
    {\tt modeb(wheels(+car,\#int))} & $p(A) \leftarrow has\_car(A,B),shape(B,hexagon)$ \\ 
    {\tt modeb(has\_car(+tr,-car))} & $p(A) \leftarrow has\_car(A,B),wheels(B,3)$ \\ 
     & $p(A) \leftarrow has\_car(A,B),wheels(B,2)$ \\ 
     & $p(A) \leftarrow has\_car(A,B),load(B,circle,3)$ \\ 
     & $p(A) \leftarrow has\_car(A,B),load(B,rectangle,3)$ \\ 
     & $p(A) \leftarrow has\_car(A,B),load(B,hexagon,3)$ \\ 
     & $p(A) \leftarrow has\_car(A,B),load(B,circle,2)$ \\ 
     & \vdots \\
    \hline
    \end{tabular}
    \caption{Examples of mode-definitions and simple feature-clauses for the Trains problem.}
    \label{fig:trainmodes}
\end{figure}

\begin{figure}[!htb]
    \centering
    \begin{tabular}{ll}
    \hline
    \multicolumn{1}{c}{Modes} & \multicolumn{1}{c}{$\Mu$-simple feature clauses}\\ 
    \hline
    {\tt modeh(p((+wkfile,+wkrank,+wrfile,} & $p((A,B,C,D,E,F)) \leftarrow C=E$ \\
    ~~~~~~~~{\tt +wrrank,+bkfile,+bkrank)))} & $p((A,B,C,D,E,F)) \leftarrow B=D$ \\
    {\tt modeb(lt(+wkrank,+wrrank))} & $p((A,B,C,D,E,F)) \leftarrow B=F$ \\
    {\tt modeb(lt(+wkrank,+bkrank))} & $p((A,B,C,D,E,F)) \leftarrow D=F$ \\
    {\tt modeb(lt(+wrrank,+wkrank))} & $p((A,B,C,D,E,F)) \leftarrow adj(A,C)$ \\
    {\tt modeb(lt(+wrrank,+bkrank))} & $p((A,B,C,D,E,F)) \leftarrow adj(A,E)$ \\
    {\tt modeb(lt(+bkrank,+wkrank))} & $p((A,B,C,D,E,F)) \leftarrow adj(C,E)$ \\
    {\tt modeb(lt(+bkrank,+wrrank))} & $p((A,B,C,D,E,F)) \leftarrow adj(B,D)$ \\
    {\tt modeb(lt(+wkfile,+wrfile))} & $p((A,B,C,D,E,F)) \leftarrow adj(B,F)$ \\
    {\tt modeb(lt(+wkfile,+bkfile))} & $p((A,B,C,D,E,F)) \leftarrow adj(D,F)$ \\
    {\tt modeb(lt(+wrfile,+wkfile))} & $p((A,B,C,D,E,F)) \leftarrow lt(A,C)$ \\
    {\tt modeb(lt(+wrfile,+bkfile))} & $p((A,B,C,D,E,F)) \leftarrow lt(C,A)$ \\
    {\tt modeb(lt(+bkfile,+wkfile))} & $p((A,B,C,D,E,F)) \leftarrow lt(A,E)$ \\
    {\tt modeb(lt(+bkfile,+wrfile))} & $p((A,B,C,D,E,F)) \leftarrow lt(E,A)$ \\
    {\tt modeb(adj(+wkrank,+wrrank))} & $p((A,B,C,D,E,F)) \leftarrow lt(C,E)$ \\
    {\tt modeb(adj(+wkrank,+bkrank))} & $p((A,B,C,D,E,F)) \leftarrow lt(E,C)$ \\
    {\tt modeb(adj(+wrrank,+bkrank))} & $p((A,B,C,D,E,F)) \leftarrow lt(B,D)$ \\
    {\tt modeb(adj(+wkfile,+wrfile))} & $p((A,B,C,D,E,F)) \leftarrow lt(D,B)$ \\
    {\tt modeb(adj(+wkfile,+bkfile))} & $p((A,B,C,D,E,F)) \leftarrow lt(B,F)$ \\
    {\tt modeb(adj(+wrfile,+bkfile))} & $p((A,B,C,D,E,F)) \leftarrow lt(F,B)$ \\
    {\tt modeb((+wkrank = +wrrank))} & $p((A,B,C,D,E,F)) \leftarrow lt(D,F)$ \\
    {\tt modeb((+wkrank = +bkrank))} & $p((A,B,C,D,E,F)) \leftarrow lt(F,D)$ \\
    {\tt modeb((+wrrank = +bkrank))} & $p((A,B,C,D,E,F)) \leftarrow A=E$ \\
    {\tt modeb((+wkfile = +wrfile))} & $p((A,B,C,D,E,F)) \leftarrow C=E$ \\
    {\tt modeb((+wkfile = +bkfile))} & $p((A,B,C,D,E,F)) \leftarrow B=D$ \\
    {\tt modeb((+wrfile = +bkfile))} & $p((A,B,C,D,E,F)) \leftarrow B=F$ \\
     & $p((A,B,C,D,E,F)) \leftarrow D=F$ \\
     & $p((A,B,C,D,E,F)) \leftarrow adj(A,C)$ \\
     & $p((A,B,C,D,E,F)) \leftarrow adj(A,E)$ \\
     & $p((A,B,C,D,E,F)) \leftarrow adj(C,E)$ \\
     & \vdots \\
     \hline
    \end{tabular}
    \caption{Examples of mode-definitions and simple feature-clauses for the Chess problem.}
    \label{fig:chessmodes}
\end{figure}

\begin{figure}[!htb]
    \centering
    \begin{sideways}
    \begin{tabular}{ll}
    \hline
    \multicolumn{1}{c}{Modes} & \multicolumn{1}{c}{$\Mu$-simple feature clauses}\\    \hline
    {\tt modeh(class(+mol,+class))} & $p(A) \leftarrow has\_struc(A,B,C,amine),gteq(C,1)$ \\
    {\tt modeb(symbond(+mol,+atomid,+atomid,\#bondtype))} & $p(A) \leftarrow has\_struc(A,B,C,ammonium\_ion),gteq(C,1)$ \\
    {\tt modeb(bond(+mol,-atomid\_1,-atomid\_2,} & $p(A) \leftarrow has\_struc(A,B,C,benzene\_ring),gteq(C,1)$ \\
    ~~~~~~~~{\tt \#atomtype,\#atomtype,\#bondtype))} & $p(A) \leftarrow has\_struc(A,B,C,benzene\_ring),gteq(C,2)$ \\
    {\tt modeb(atom(+mol,-atomid,\#element))} & $p(A) \leftarrow has\_struc(A,B,C,benzene\_ring),gteq(C,3)$ \\
    {\tt modeb(has\_struc(+mol,-atomids,-length,\#structype))} & $p(A) \leftarrow has\_struc(A,B,C,benzene\_ring),gteq(C,4)$ \\
    {\tt modeb(connected(+mol,+atomids,+atomids))} & $p(A) \leftarrow has\_struc(A,B,C,benzene\_ring),gteq(C,5)$ \\
    {\tt modeb(fused(+mol,+atomids,+atomids))} & $p(A) \leftarrow has\_struc(A,B,C,benzene\_ring),gteq(C,6)$ \\
    {\tt modeb(gteq(+length,\#length))} & $p(A) \leftarrow has\_struc(A,B,C,conjug\_base\_car),gteq(C,1)$ \\
    {\tt modeb(lteq(+length,\#length))} & $p(A) \leftarrow has\_struc(A,B,C,conjug\_base\_car),gteq(C,2)$ \\
    {\tt modeb((+atomid = +atomid))} & $p(A) \leftarrow has\_struc(A,B,C,conjug\_base\_car),gteq(C,3)$ \\
    {\tt modeb((+atomidids = +atomidids))} & $p(A) \leftarrow has\_struc(A,B,C,diazo\_group),gteq(C,1)$ \\
    {\tt modeb((+length = +length))} & $p(A) \leftarrow has\_struc(A,B,C,diazo\_group),gteq(C,2)$ \\
     & $p(A) \leftarrow has\_struc(A,B,C,dithio\_ester\_car),gteq(C,1)$ \\
     & $p(A) \leftarrow has\_struc(A,B,C,dithio\_ester\_car),gteq(C,2)$ \\
     & $p(A) \leftarrow has\_struc(A,B,C,dithio\_ester\_car),gteq(C,3)$ \\
     & \vdots \\
     \hline
    \end{tabular}
    \end{sideways}
    \caption{Examples of mode-definitions and simple feature-clauses for the NCI problem.}
    \label{fig:ncimodes}
\end{figure}

\subsection{Debugging Inconsistent Explanations from the CRM}
\label{app:debug}

For the Chess problem, $1044$ instances (out of $10,000$) are inconsistently explained.
That is, the explanation graph from the most relevant output vertex does not contain
an acceptable feature-clause. Further examination reveals: (a) for all 1044 instances,
the predictions made by the CRM are correct; (b) a majority ($1033/1044$) of the inconsistently
instances are $+$ examples for which
the White King and Black King are on adjacent files and ranks (the corresponding
acceptable feature-clause is $p((A,B,C,D,E,F))$ $\leftarrow$
$adj(A,E),$ $adj(B,F)$). We find there are $19$ distinct `buggy explanations'
produced by the CRM for the inconsistently explained data. The corresponding
feature-clauses at the `root' in the explanation graph
are listed in \autoref{fig:buggyex} (for simplicity, we do not show the simpler
features and the full graph structure):

\begin{figure}[!htb]
    \centering
    \begin{tabular}{rl}
    \hline
    1. & $p((A,B,C,D,E,F)) \leftarrow adj(B,F)),lt(D,B),adj(C,E)$ \\
    2. & $p((A,B,C,D,E,F)) \leftarrow adj(D,F)),B=D,lt(A,E)$ \\
    3. & $p((A,B,C,D,E,F)) \leftarrow adj(A,E),lt(A,C),lt(F,D)$ \\
    4. & $p((A,B,C,D,E,F)) \leftarrow lt(E,C),A=C,lt(B,F))$ \\
    5. & $p((A,B,C,D,E,F)) \leftarrow lt(E,A),A=C,adj(A,C)$ \\
    6. & $p((A,B,C,D,E,F)) \leftarrow lt(E,A),lt(D,B),adj(A,E)$ \\
    7. & $p((A,B,C,D,E,F)) \leftarrow B=D,A=C,adj(A,E)$ \\
    8. & $p((A,B,C,D,E,F)) \leftarrow B=F,A=E,lt(E,C)$ \\
    9. & $p((A,B,C,D,E,F)) \leftarrow lt(A,C),lt(B,D),adj(A,C)$ \\
    10. & $p((A,B,C,D,E,F)) \leftarrow lt(D,F)),A=E,lt(E,C)$ \\
    11. & $p((A,B,C,D,E,F)) \leftarrow A=E,adj(D,F)),adj(B,F))$ \\
    12. & $p((A,B,C,D,E,F)) \leftarrow lt(B,F)),lt(E,C),adj(D,F))$ \\
    13. & $p((A,B,C,D,E,F)) \leftarrow adj(B,D),lt(E,A),adj(A,C)$ \\
    14. & $p((A,B,C,D,E,F)) \leftarrow adj(A,E),lt(C,E),adj(B,F))$ \\ 
    15. & $p((A,B,C,D,E,F)) \leftarrow A=E,A=C,adj(B,F))$ \\
    16. & $p((A,B,C,D,E,F)) \leftarrow lt(A,C),lt(F,B),adj(B,F))$ \\ 
    17. & $p((A,B,C,D,E,F)) \leftarrow lt(D,B),lt(A,E),adj(A,E)$ \\
    18. & $p((A,B,C,D,E,F)) \leftarrow lt(E,A),A=C,adj(B,D)$ \\
    19. & $p((A,B,C,D,E,F)) \leftarrow adj(A,E),B=F,lt(A,E)$ \\
    \hline
    \end{tabular}
    \caption{Buggy explanations produced by the CRM for the inconsistently explained 
    data instances from the Chess problem.}
    \label{fig:buggyex}
\end{figure}

\end{appendices}

\clearpage

\bibliography{references}
\end{document}